\theoremstyle{plain}
\newtheorem{theorem}{Theorem}[section]
\newtheorem{proposition}[theorem]{Proposition}
\theoremstyle{definition}
\theoremstyle{remark}
\newtheorem{remark}{Remark}
\newcommand{\R}{\mathbb R}
\newcommand{\E}{\mathbb E}
\newcommand{\tr}{\operatorname{tr}}
\DeclareMathOperator*{\argmin}{arg\,min}
\newcommand{\inv}{^{-1}}
\DeclareMathOperator*{\Var}{Var}
\newcommand{\loss}{\mathcal L}
\newcommand{\batch}{\mathcal B}
\newcommand{\D}{\mathrm D}
\newcommand{\prox}{\operatorname{prox}}
\newcommand{\fsp}{\mathcal F}
\newcommand{\Vsp}{\mathbb V}
\newcommand{\dv}{\mathrm d}
\newcommand{\rank}{\operatorname{rank}}
\newcommand{\fst}{f_{\operatorname{struct}}}
\title{Closed-Form Last Layer Optimization}
\author{%
  Alexandre Galashov\thanks{Equal contribution.}  \\
  Google DeepMind \\
  Gatsby Unit, University College London \\
  \texttt{agalashov@google.com} \\
  \And
  Nathaël Da Costa\footnotemark[1] \\
  Tübingen AI Center, University of Tübingen \\
  \texttt{nathael.da-costa@uni-tuebingen.de} \\
  \And
  Liyuan Xu \\
  Secondmind\thanks{Work done while at University College London} \\
  \texttt{liyuan9988@gmail.com} \\
  \And
  Philipp Hennig \\
  Tübingen AI Center, University of Tübingen \\
  \texttt{philipp.hennig@uni-tuebingen.de} \\
  \And
  Arthur Gretton  \\
  Google DeepMind \\
  Gatsby Unit, University College London \\
  \texttt{gretton@google.com} \\
}
\begin{document}

\maketitle

\begin{abstract}
    Neural networks are typically optimized with variants of stochastic gradient descent. Under a squared loss, however, the optimal solution to the linear last layer weights is known in closed-form. We propose to leverage this during optimization, treating the last layer as a function of the backbone parameters, and optimizing solely for these parameters. We show this is equivalent to alternating between gradient descent steps on the backbone and closed-form updates on the last layer. We adapt the method for the setting of stochastic gradient descent, by trading off the loss on the current batch against the accumulated information from previous batches. We provide theoretical analyses showing convergence of the method to an optimal solution in the neural tangent kernel regime, as well as quantifying the gains compared to standard SGD in a one-step analysis. Finally, we demonstrate the effectiveness of our approach compared with SGD and Adam on a squared loss in several regression tasks, including neural operators and causal inference.
\end{abstract}

\begin{figure}[h]
    \begin{minipage}{0.30\textwidth}
        \includegraphics[width=1\textwidth]{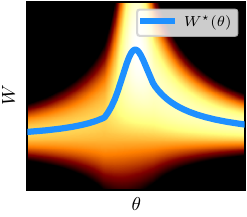}
    \end{minipage}
    \hfill
    \begin{minipage}{0.67\textwidth}
    \caption{The squared loss landscape of a two-parameter neural network $f(x) = W\operatorname{ReLU}(\theta x)$ with three random training data points. Dark / light regions correspond to values of high / low loss respectively. We plot in blue the optimal last layer parameter $W^\star(\theta)$ as a function of the backbone parameter $\theta$. We propose to optimize along the blue curve, rather than in two-dimensional space.}\label{fig:2d-nn}
    \end{minipage}
\end{figure}

\section{Introduction}
Training deep neural networks is almost always done with variants of stochastic gradient descent (SGD). Despite their empirical success, these iterative methods treat every layer of the network in the same way. However, the linear last layer often admits a much simpler -- and in the case of squared loss, closed-form -- optimal solution. This mismatch suggests an opportunity: if the optimal last layer weights can be computed directly given the current features produced by the backbone, we can regard the last layer as an implicit function of the backbone parameters. This could simplify the optimization problem by constraining the last layer to be optimal throughout (see \cref{fig:2d-nn}).

In SGD, gradients at each step are computed in minibatches. Because of computational constraints, the closed-form solution of the last layer should also use minibatches. This risks overfitting the last layer to each batch at each optimization step. To correct for this issue, there is the need to account for previous last layer solutions.

In this paper, we develop a training procedure that can perform optimization with a closed-form optimal last layer through SGD on the backbone parameters. Our contributions are as follows:
\begin{enumerate}[wide, labelwidth=!, labelindent=0pt]
    \item We propose leveraging the closed-form last layer solution for squared loss, and optimizing the backbone parameters while treating the last layer as a deterministic function of those parameters. We show that this requires no backpropagation through the closed-form solution (\cref{sec:method}).
    \item We adapt the approach to stochastic mini-batch training by regularizing for previous last layer solutions, producing a practical algorithm that integrates cleanly with standard training pipelines and that admits an approximate Kalman filter interpretation (\cref{sec:stochastic}).
    \item We provide theoretical analyses. In the infinite width neural tangent kernel (NTK) limit we show that, under continuous time dynamics, despite non-convexity of the loss in function space, convergence of the method to an optimal solution is guaranteed. We further quantify the superiority of our method compared to SGD in the discrete time case with a one-step analysis (\cref{sec:theory}).
    \item We validate the approach empirically, demonstrating improvements compared to standard training under squared losses, including applications in deep feature instrumental variable regression and Fourier neural operators (\cref{sec:experiments}).
\end{enumerate}

\subsection{Related Work}

We outline several strands of related work.

\textbf{Two-timescale regime.} Optimizing under a closed-form last layer can be seen as performing  bilevel optimization, where an optimization problem is nested into another \citep{zhang_introduction_2024,petrulionyte_functional_2024}. In recent years, this last layer bilevel optimization approach has been considered in several works as a simplifying assumption for demonstrating convergence of gradient descent in neural networks. This framework was coined the two-timescale regime \citep{marion_leveraging_2023,berthier_learning_2024,bietti_learning_2025,barboni_ultra-fast_2025}.

\cite{marion_leveraging_2023} noted that, by the envelope theorem, optimizing with an optimal last layer as a function of the backbone parameters is equivalent to optimizing only the backbone parameters while keeping the last layer optimal. In the present work, we bring this theoretical argument to a practical method that can accelerate optimization, by observing that the envelope result can be leveraged computationally in backpropagation. Indeed, unlike these works, our aim is to propose novel methodology, and demonstrate its practicality in a number of scenarios. From the theoretical side, our work is the first to analyze the critical points of the resulting loss in function space, and the convergence to a global minimum in the NTK regime. Other works have investigated the mean-field regime instead \citep{wang_mean-field_2024,takakura_mean-field_2024}.

In an experiment, \cite{barboni_ultra-fast_2025} propose to update the last layer by an exponential moving average of closed-form solutions to account for the stochasticity in SGD. However, this approach decouples the last layer from the backbone, as they no longer optimize the same loss, which leads to instabilities. In contrast, our approach for stochasticity allows the last layer and the backbone to continue optimizing for the same loss.

\textbf{Layer-wise learning.} \cite{singh_layer-specific_2015,you_large_2017} propose to tune the learning rates of SGD layer-wise. \cite{you_large_2017} further show that, while shallower layers tend to have smaller gradients, this is not a reason for such layers needing larger learning rates, as the layer weights also appear to be smaller themselves. Indeed, \cite{chen_which_2022} analyze the convergence speeds of different layers during optimization, and show that, despite smaller gradients, shallower layers learn faster than deeper layers. Our method thus remediates this layer convergence bias for the last layer.

\textbf{Bayesian last layers.} These works leverage closed-form or quasi-closed-form solutions for last layer Bayesian posteriors to train them with variational inference \citep{harrison_variational_2023,brunzema_bayesian_2024,harrison_heteroscedastic_2025}. In contrast, the present work does not construct a Bayesian posterior, and instead leverages closed-form solutions for the last layer to accelerate optimization.

\textbf{Feature learning in instrumental variables regression.} Training with a closed-form last layer has found applications in causal inference, particularly instrumental variables regression \citep{xu_learning_2020,xu_deep_2021}. In these applications, the closed-form last layer solution is required to solve the bilevel optimization problem. However, these works backpropagated through the closed-form solution, making them computationally expensive. Moreover, they did not include a proximal regularizer, and thus required large batch sizes for the networks to train. We address both limitations in the present work, and demonstrate superior performance of our method in these applications.

\textbf{Neural tangent kernel (NTK) regime.} NTK analysis typically assumes convexity of the loss in function space to show convergence to an optimum \cite{jacot_neural_2018}. The present work goes beyond the standard analysis to show convergence to an optimum for our loss, which is non-convex in function space.

\section{Background}

\paragraph{Non-linear multi-dimensional regression.}
We consider the regression problem with a squared loss in which we aim to predict $y \in \R^o$ from the input $x\in \mathcal X$ (typically $\mathcal{X} \subset \R^m$), where $\mathcal X$ is some input space. We employ a
model $f(x; W, \theta) = W\phi_\theta(x)$, where $\phi_\theta\colon \mathcal X \to \R^d$ is the neural network feature map, which we call the \emph{backbone}, parametrised by $\theta\in \Theta$, $\Theta \subset \R^N$ is the feature parameter space, and $W \in \mathbb{R}^{o\times d}$ is the weight matrix for the last linear layer. Given $n$ observations $\{(x_i, y_i)\}_{i=1}^n$, we want to find the parameters $(W,\theta)$ that minimize the regularized squared loss
\begin{equation}\label{eq:old-loss}
    \loss(W,\theta) = \sum_{i=1}^n\|y_i - W\phi_\theta(x_i)\|_2^2 + \beta\|W\|^2_F
\end{equation}
for some hyperparameter $\beta> 0$, where $\|\cdot\|_F$ is the Frobenius norm.
For a fixed $\theta$, the objective~\eqref{eq:old-loss} is a ridge regression problem for $W$ which enjoys the closed-form solution (derivation in \cref{sec_app:ridge_derivation})
\begin{equation}\label{eq:closed-form-w}
    \begin{aligned}
            W^\star(\theta)&:= \argmin_{W\in \R^{o\times d}}\mathcal L(W,\theta)
            =
            Y\phi_\theta(X)^\top\left(\phi_\theta(X)\phi_\theta(X)^\top+\beta I\right)\inv
    \end{aligned}
\end{equation}
where $X = (x_1,\dots,x_n)$, $Y = (y_1,\dots,y_n) \in \R^{o\times n}$ and $\phi_\theta(X)\in \R^{d\times n}$ is a matrix where we apply $\phi_{\theta}$ to each $x_i$.

\paragraph{Gradient descent.} An approach to optimize~\eqref{eq:old-loss} for $W$ and $\theta$ is to use gradient descent, an iterative optimization algorithm which starts from $({W}_0, {\theta}_0)$ and at each iteration $t$ solves the following linearized optimization problem
\begin{equation}
    \label{eq:linearized_problem}
    \begin{aligned}
            {W}_{t+1},{\theta}_{t+1} = \argmin_{W,\theta} \nabla\loss({W}_{t},{\theta}_{t})^\top [W,\theta]
            + \frac{1}{2\eta} ||W-{W}_{t}||_{F}^2 + \frac{1}{2\alpha} ||\theta - {\theta}_{t}||_{F}^2,
    \end{aligned}
\end{equation}
where $(\alpha, \eta)$ are learning rates and $[W,\theta]$ denotes the concatenation operation. The solution to~\eqref{eq:linearized_problem} can be expressed with the familiar gradient descent updates
\begin{equation}\label{eq:gd-update}
    \begin{aligned}
        {W}_{t+1} = {W}_{t} - \eta\nabla_{W}\mathcal{L}({W}_{t}, {\theta}_t), \qquad
        {\theta}_{t+1} = {\theta}_{t} - \alpha \nabla_{\theta}\mathcal{L}({W}_{t}, {\theta}_t). 
    \end{aligned}
\end{equation}

\section{Optimizing with a Closed-Form Last Layer}\label{sec:method}
During optimization, we would like to leverage the fact that, for each $\theta$, the optimal last layer $W^\star(\theta)$ is available in closed-form (\cref{eq:closed-form-w}). The idea is that there is no need to update $ W_t$ through gradient steps as in \cref{eq:gd-update}, as we may treat it directly as a function of $\theta$ through \cref{eq:closed-form-w}. This leads to the loss
\begin{equation}\label{eq:new-loss}
    \begin{aligned}
        \loss^\star(\theta) &:= \loss(W^\star(\theta),\theta)
        =
        \sum_{i=1}^n\|y_i - W^\star(\theta)\phi_\theta(x_i)\|_2^2 + \beta\|W^\star(\theta)\|^2_F
    \end{aligned}
\end{equation}
We now propose to optimize this loss instead of \cref{eq:old-loss}. Computationally, this involves alternating between solving linear regressions to obtain $W^\star(\theta)$ and gradient steps on $\theta$ through $\mathcal L$ (backpropagating through the closed-form solution to the regression). Explicitly, we start at some $\theta_0$ and iterate
\begin{equation}\label{eq:new-gd-update}
    \textstyle
    {W}_{t+1} = W^\star(\theta_t), \qquad {\theta}_{t+1} = {\theta}_{t} - \alpha \nabla_{\theta}\loss^\star({\theta}_t).
\end{equation}
Note that $\nabla_{\theta}\loss^\star({\theta}_t)$ involves backpropagating through $W^\star(\theta)$ given by \cref{eq:closed-form-w},
and hence through an inverse. This is computationally demanding. Fortunately, this operation is not needed, as the following result demonstrates (see also the \emph{envelope theorem} and \citep[Remark 1]{marion_leveraging_2023}):
\begin{proposition}\label{thm:gradient-equiv}
    For fixed $\theta$, letting $W^\star := W^\star(\theta)$ with \cref{eq:closed-form-w}, we have
    \begin{equation}
        \nabla_\theta \loss^\star(\theta) = \nabla_\theta \loss(W^\star, \theta)
    \end{equation}
\end{proposition}
\begin{proof}
    By the chain rule,
    \begin{equation}
        \begin{aligned}
            &\nabla_\theta \loss^\star(\theta)
            =
            \underbrace{\nabla_W\loss(W^\star,\theta)}_{=0}\D W^\star(\theta)^\top+\nabla_\theta\mathcal L(W^\star,\theta)\underbrace{\D\operatorname{id}(\theta)^\top}_{=I}
            =
            \nabla_\theta\mathcal L(W^\star,\theta)
        \end{aligned}
    \end{equation}
    where $\nabla_W\loss(W^\star,\theta) = 0$ since
    $W^\star = \argmin_W \loss(W,\theta)$ and $\D$ denotes the differential operator.
\end{proof}
Unlike $\nabla_\theta \loss^\star(\theta)$ which requires complex backpropagation, $\nabla_\theta \loss(W^\star, \theta)$ uses standard backpropagation through $\phi_\theta$ with the fixed last layer $W^\star$. From \Cref{thm:gradient-equiv}, \cref{eq:new-gd-update} is equivalent to
\begin{equation}\label{eq:final-gd-update}
    {W}_{t+1} = W^\star(\theta_{t}), \qquad{\theta}_{t+1} = {\theta}_{t} - \alpha \nabla_{\theta}\loss( W_{t+1},\theta_t),
\end{equation}
i.e.~it suffices to replace the gradient step on $W$ in \cref{eq:gd-update} by a closed-form update of the form \cref{eq:closed-form-w}.

\section{The Practical Method}\label{sec:stochastic}
In practice, neural networks are not trained with gradient descent, but with \emph{stochastic} gradient descent, or variants thereof. At time $t$ we observe a batch of data $\batch_t \subset \{(x_i, y_i)\}_{i=1}^n$. Then the squared loss on the batch is given by
\begin{equation}
    \label{eq:l2_loss_batch}
    \loss_{\batch_t} (W,\theta) := \sum_{(x_i,y_i)\in \batch_t} \|y_i - W\phi_\theta(x_i)\|_2^2 +\beta\|W\|_F^2.
\end{equation}
As before, we may write
\begin{equation}
    \label{eq:stochastic_closed_form}
    W_{\batch_t}^\star(\theta) := \argmin_{W\in \R^{o\times d}}\loss_{\batch_t}(W,\theta).
\end{equation}
and update
\begin{equation}\label{eq:sgd-update}
    \textstyle
    {W}_{t+1} = W^\star_{\batch_t}(\theta_{t}), \qquad{\theta}_{t+1} = {\theta}_{t} - \alpha \nabla_{\theta}\loss_{\batch_t}( W_{t+1},\theta_t).
\end{equation}
While such an approach will be valid for large batch sizes, it will be ineffective for small batches as the last layer $W_{\batch_t}^\star(\theta)$ will overfit to each batch $\batch_t$ at each $t$. The last layer estimates $ W_{t+1}$ will then vary drastically at each iteration. As a consequence, the features $\phi_{\theta_t}(X)$ might not be able to adapt to an unstable last layer. The smaller the batch size relative to the complete dataset, the more severe the issue (see the experiments in \cref{sec:experiments}).


Instead, we propose to optimize a different loss. Motivated by how gradient descent regularizes to the previous estimates of the parameters (see \cref{eq:linearized_problem}) we propose to regularize the objective function against the distance from $W$ to the previous estimate $ W_t$, yielding the \emph{proximal loss}
\begin{equation}
    \label{eq:l2_prox}
    \loss^{\prox}_{\batch_t,  W_t} (W, \theta) := \sum_{(x_i,y_i)\in \batch_t} \|y_i - W\phi_\theta(x_i)\|_2^2 + \lambda\|W -  W_t\|_F^2
\end{equation}
where $\|\cdot\|_F$ is the Frobenius norm and $\lambda>0$ is some hyperparameter. This ensures that closed-form solutions to~\cref{eq:l2_prox} are close to the previous estimate $W_t$.

As before, we define (derivation in \cref{sec_app:proximal_derivation})
\begin{equation}\label{eq:closed-form-proximal}
\begin{aligned}
        W^\star_{\mathcal B_t, W_t}(\theta) := \argmin_{W\in \R^{o\times d}}\loss^{\prox}_{\batch_t, W_t} (W,\theta)
    = \big(Y_t\phi_\theta(X_t)^\top+  \lambda W_t\big)\left(\phi_\theta(X_t)\phi_\theta(X_t)^\top+\lambda I\right)\inv
\end{aligned}
\end{equation}
where $X_t$ and $Y_t$ correspond to the matrix of inputs and outputs of batch $\batch_t$ respectively, and
\begin{equation}
    \loss^{\prox\star}_{\batch_t, W_t}(\theta) := \loss^{\prox}_{\batch_t, W_t} (W^\star_{\batch_t, W_t}(\theta),\theta).
\end{equation}
Thus we propose to start at some $( W_0, \theta_0)$ and iterate
\begin{equation}\label{eq:new-sgd-update}
    \textstyle
    {W}_{t+1} = W^\star_{\mathcal B_t, W_t}(\theta_t), \qquad {\theta}_{t+1} = {\theta}_{t} - \alpha \nabla_{\theta}\loss^{\prox\star}_{\batch_t, W_t}({\theta}_t).
\end{equation}
This approach addresses the stochasticity issues and ensures that $W$ and $\theta$ optimize the same loss, namely $\loss^{\prox}$. Moreover, we can obtain a result analogous to \cref{thm:gradient-equiv} for the proximal loss:
\begin{proposition}\label{thm:stochastic-gradient-equiv}
    For fixed $\theta$, letting $W^\star_{\batch_t, W_t} := W^\star_{\batch_t, W_t}(\theta)$, we have
    \begin{equation}
        \nabla_\theta\loss^{\prox\star}_{\batch_t, W_t}(\theta) = \nabla_\theta\loss_{\batch_t}(W^\star_{\batch_t, W_t},\theta).
    \end{equation}
\end{proposition}
\begin{proof}
    As for \cref{thm:gradient-equiv}, we have $\nabla_\theta\loss^{\prox\star}_{\batch_t, W_t}(\theta) = \nabla_\theta \loss^{\prox}_{\batch_t, W_t} (W^\star_{\batch_t, W_t},\theta)$. Since the regularizer $\|W^\star_{\batch_t,W_t}-  W_t\|$ does not depend on $\theta$, we have $\nabla_\theta \loss^{\prox}_{\batch_t,  W_t} (W^\star_{\batch_t, W_t},\theta) = \nabla_\theta\loss_{\batch_t}(W^\star_{\batch_t, W_t},\theta)$.
\end{proof}
Like \cref{thm:gradient-equiv}, \cref{thm:stochastic-gradient-equiv} replaces the demanding backpropagation of $\nabla_\theta\loss^{\prox\star}_{\batch_t, W_t}(\theta)$ by a standard backpropagation step for $\nabla_\theta\loss_{\batch_t}(W^\star_{\batch_t},\theta)$. So \cref{eq:new-sgd-update} is equivalent to
\begin{equation}\label{eq:final-sgd-update}
    {W}_{t+1} = W^\star_{\mathcal B_t, W_t}(\theta_t), \qquad {\theta}_{t+1} = {\theta}_{t} - \alpha \nabla_{\theta}\loss_{\batch_t}(  W_{t+1},{\theta}_t).
\end{equation}
In \cref{sec_app:kalman} we give another interpretation for \cref{eq:final-sgd-update} as performing approximate Kalman filtering on the last layer throughout SGD on the backbone parameters.

\subsection{Numerical considerations}
\label{subsec:numerical_considerations}

We now discuss practical choices required to implement the updates~\eqref{eq:final-sgd-update} in a practical algorithm.


\paragraph{Use of a bias term.} We could add an additional bias dimension to feature vector $\phi_{\theta}$  and a learnable bias $b$ to the last layer $W$, which lead to extended vectors $\tilde{\phi}_{\theta} = [\phi_{\theta},1]$ and $\tilde{W} =[W,b]$. Overall, we found it was always beneficial for empirical performance to add bias.



\paragraph{Last layer initialization.}
First, we consider \emph{zeros}, i.e. $W_0^{ij} = 0,\forall i,j$, which worked the best in practice. We then consider classical weight initializations -- \emph{LeCun normal} $W_0^{ij}\sim \mathcal{N}\left(0, \frac{1}{d} \right)$, \emph{Xavier normal} $W_0^{ij} \sim \mathcal{N}\left(0, \frac{2}{d + o} \right)$ and \emph{He normal} $W_0^{ij} \sim \mathcal{N}\left(0, \frac{2}{d} \right)$. Bias is always initialized as $b_0^i =0$. See Appendix~\ref{sec:initialization_impact} for detailed ablations.

\paragraph{Full algorithm.} The complete method is given in Algorithm~\ref{alg:method}. The loop is structured so the last layer is re-fit after each gradient update, ensuring that the last layer is up-to-date at the end of each iteration.

\begin{algorithm}[H]
\caption{Proximal closed-form SGD}
\begin{algorithmic}[1] 
    \State \textbf{Given:} Batch size $B$, proximal coefficient $\lambda > 0$, neural network $\phi_{\theta}$ with initial parameters $\theta_0$, learning rate $\alpha > 0$, initial last layer parameters $W_0$.
    \State $t \gets 0$
    \State \textbf{Fit last layer on the first batch}
    \State ${W}_{1} \leftarrow W^\star_{\mathcal B_1,W_0}(\theta_t)$
    \While{${\theta}_t$ has not converged}
    \State $t \gets t+1$
    \State \textbf{Update backbone on the current batch $\batch_t$}
    \State $\theta_{t} \gets {\theta}_{t-1} - \alpha \nabla_{\theta}\loss_{\batch_t}(  W_{t},{\theta}_{t-1})$
    \State \textbf{Update last layer on the next batch $\batch_{t+1}$}
    \State ${W}_{t+1} \leftarrow W^\star_{\mathcal B_{t+1},{W}_{t}}(\theta_t)$
\EndWhile
    \State \textbf{Output:} Optimized $(W^\star, \theta^\star)$
\end{algorithmic}
\label{alg:method}
\end{algorithm}

We present an alternative method in \cref{sec_app:alternative_algo} as \cref{alg:method_simple}. In this alternative method, the gradient and closed-form solutions are computed on the same batch within each iteration, making it simpler to implement. However it departs from our theoretical framework laid out in \cref{eq:final-sgd-update} and \cref{thm:stochastic-gradient-equiv}.

\subsection{Application to Classification}

While the above methodology is mainly introduced for regression problems with $\ell_2$ loss, here we discuss how it could be extended to classification settings. Please see Appendix~\ref{sec_app:classification_results} for the experimental results on classification.

We treat the output $y_i$ as one-hot vectors, i.e., $y_i \in \{0, 1\}^C$ such that $\sum_{c=1}^{C} [y_i]_c = 1$, where $C=o$ is the number of classes. We then can directly optimize the a squared loss on one-hot vectors (see for instance \cite{hui2021evaluation} for the use of the squared loss in classification) using~\eqref{eq:final-sgd-update} to optimize $W$ and $\theta$. This, however does not guarantee that the model $f(x; W, \theta) = W\phi_\theta(x)$ outputs probability vectors, i.e.~$\sum_{c=1}^{C} W_{c\cdot} \phi_\theta(x) \neq 1$, where $W_{c\cdot}$ is the $c$\textsuperscript{th} row of $W$. Therefore, for prediction, we take the arg max over output vectors, i.e.~$c_{\text{out}}(x) = \arg\max_c W_{c\cdot}\phi_{\theta}(x)$. While this strategy is a simple heuristic, we found that using it together with~\cref{eq:final-sgd-update} led to reasonable performance.

\cite{hui2021evaluation} proposes changes in the parametrization of the squared loss that empirically improve the performance on classification; in \cref{sec_app:gen_derivation} we derive a formula for closed-form solutions for these squared loss generalizations. These can be expensive to compute for large number of classes, so we do not make use of them in this work.

\section{Theoretical Analyses}\label{sec:theory}

\subsection{Analysis of the Modified Loss}\label{sec:loss_theory}
In the section we uncover theoretical insights for the loss $\loss^\star(\theta)$ from \cref{eq:new-loss}. For a tractable analysis, we will start by considering this loss as a function of the backbone, instead of the parameters $\theta$.

Let $\fsp$ be the space of functions $\phi\colon \mathcal X \to \R^d$. We define the loss~\eqref{eq:old-loss} with a backbone $\phi$ as second argument:
\begin{equation}\label{eq:loss-fsp}
    \loss_\fsp(W,\phi) = \sum_{i=1}^n\|y_i-W\phi(x_i)\|_2^2 + \beta\|W\|_F^2
\end{equation}
where $W\in\R^{o\times d}$, $\phi \in \fsp$ and $\beta\geq 0$. Then, as before we define $W^\star_\fsp(\phi):= \argmin_{W\in\R^{o\times d}}\loss_\fsp(W,\phi)$ and $\loss^\star_\fsp(\phi) := \loss^\star_\fsp(W^\star_\fsp(\phi),\phi)$.

The following theorem shows that $\loss^\star_\fsp$ possesses unexpected characteristics.
\begin{theorem}\label{thm:non-convex}
    If $Y\neq 0$ then $\loss^\star_\fsp$ is not convex and admits critical points $\phi^\star$ that are not global minimizers.
\end{theorem}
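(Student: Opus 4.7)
The plan is to reduce both statements to exhibiting a single critical point whose value strictly exceeds the infimum, since this implies non-convexity: a differentiable convex function on a convex domain has every critical point as a global minimizer. The candidate I propose is $\phi\equiv 0$, the identically zero feature map.

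To prove $\phi\equiv 0$ is critical, I first observe that $\loss^\star_\fsp(\phi)$ depends on $\phi$ only through the finite tuple $(\phi(x_1),\dots,\phi(x_n))\in \R^{d\times n}$, so derivatives in $\fsp$ reduce to gradients on $\R^{d\times n}$. Setting $\phi=0$ in \cref{eq:closed-form-w-fsp} immediately gives $W^\star_\fsp(0)=0$. Next I repeat the envelope-theorem argument from the proof of \cref{thm:gradient-equiv}, differentiating in $\phi$ rather than in $\theta$, to obtain
\begin{equation}
    \D_\phi \loss^\star_\fsp(\phi) = \D_\phi \loss_\fsp(W^\star_\fsp(\phi), \phi).
\end{equation}
Evaluating at $\phi=0$, the right-hand side is the $\phi$-derivative of $\loss_\fsp(0,\phi)=\sum_i\|y_i\|_2^2$, which is constant in $\phi$ and hence vanishes.

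To show $\phi\equiv 0$ is not a global minimizer, I exhibit a $\phi$ achieving strictly smaller loss than $\loss^\star_\fsp(0)=\|Y\|_F^2$. Since $Y\neq 0$, we may relabel so $y_1\neq 0$. Fix any unit vector $v_1\in\R^d$ and, for $t>0$, define $\phi_t$ at the data by $\phi_t(x_1)=tv_1$ and $\phi_t(x_i)=0$ for $i\geq 2$. Then $\phi_t(X)\phi_t(X)^\top = t^2 v_1 v_1^\top$, and splitting $\R^d$ into $\spn(v_1)$ and its orthogonal complement gives $(t^2 v_1 v_1^\top+\beta I)^{-1}v_1 = \frac{1}{t^2+\beta}v_1$. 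Substituting into \cref{eq:closed-form-w-fsp} yields $W^\star_\fsp(\phi_t)=\frac{t}{t^2+\beta}y_1 v_1^\top$, and combining the fit residual with the $\beta\|W^\star\|_F^2$ regularizer collapses telescopically to
\begin{equation}
    \loss^\star_\fsp(\phi_t) = \frac{\beta}{t^2+\beta}\|y_1\|_2^2 + \sum_{i=2}^n\|y_i\|_2^2 \;<\; \|Y\|_F^2
\end{equation}
for every $t>0$. Hence $\phi\equiv 0$ is a critical point that is not a global minimizer, and $\loss^\star_\fsp$ is therefore not convex.

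The only mildly delicate point is giving meaning to ``critical point'' and ``gradient'' in the infinite-dimensional space $\fsp$; I sidestep this by noting that the loss factors through the evaluation map $\phi\mapsto\phi(X)$, reducing the problem to a finite-dimensional calculation identical in structure to \cref{thm:gradient-equiv}. Everything else is routine linear algebra, with the rank-one structure of $\phi_t(X)\phi_t(X)^\top$ chosen so that the matrix inverse is explicit and the cross terms cancel cleanly.
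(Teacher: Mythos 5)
Your proof is correct and follows the same strategy as the paper's: show that $\phi\equiv 0$ is a critical point of $\loss^\star_\fsp$ via the envelope argument of \cref{thm:gradient-equiv} (so that $W^\star_\fsp(0)=0$ forces the derivative to vanish), observe that it is not a global minimizer, and conclude non-convexity. The only substantive differences are that the paper obtains criticality of $\phi=0$ as a special case of a general gradient formula $\nabla\loss^\star_\fsp(\phi)=-2(\Phi\Phi^\top+\beta I)\inv\Phi Y_\star^\top Y_\perp$, which it reuses to characterize all critical points and to prove \cref{thm:ntk}, whereas you check the single point directly; and that you supply an explicit rank-one competitor $\phi_t$ with $\loss^\star_\fsp(\phi_t)<\|Y\|_F^2$ -- a step the paper merely asserts, and which (like the paper's own analysis) implicitly assumes $x_1$ is distinct from the remaining $x_i$ so that $\phi_t$ is a well-defined element of $\fsp$.
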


The proof for this theorem can be found in \cref{sec_app:modified_loss}.

In contrast, the usual squared (or ridge) loss $\sum_{i=1}^n \|y_i -f(x_i)\|^2_2$, where $f\colon \mathcal{X} \rightarrow \mathbb{R}^{o}$ which does not use a closed-form solution on the last layer $W^\star_\fsp(\phi)$, is convex in $f$ . The critical points of this loss function are exactly the functions $f^\star$ such that $f^\star(x_i) = y_i$ for all $i$ (or $f^\star(x_i) \approx y_i$ in the presence of a ridge regularizer). The non-trivial critical points of $\loss^\star_\fsp$ occur because, when the features $\phi(X)_{j \cdot}$ are orthogonal to the outputs $Y_{k \cdot}$ for all $j$ and $k$, there is no gradient information for the features. For example $\phi^\star = 0$ is always a critical point of $\loss^\star_\fsp$.

Is this an issue when $\phi$ is a neural network? Intuitively, these `bad' critical points form a `lower dimensional manifold' in function space, so should be avoided with high probability. We formalize this intuition by analyzing the NTK infinite width neural network regime \cite{jacot_neural_2018}. In this regime, the initial neural network function $\phi$ can be shown to be a Gaussian process with respect to the random initialization, controlled by the neural Gaussian process kernel (NGPK). The training dynamics of $\phi$ are given by kernel gradient descent in function space $\fsp$ with respect to the NTK. If the NTK is positive definite, we know that $\phi$ will converge to a critical point of $\loss^\star_\fsp$ in $\fsp$. The following result shows that if we make the slightly stronger assumption that the NGPK is positive definite (see for example \citep[Theorem 4.5]{gao_wide_2023}), then $\phi$ will converge to a global minimizer.

\begin{theorem}\label{thm:ntk}
    In the NTK regime with positive definite NGPK, assuming $\beta=0$, $d \geq \rank Y$ and that the $x_i$ are distinct, we have that $\phi$ converges almost surely to a global minimizer $\phi^\star$ of $\loss^\star_\fsp$.
\end{theorem}

The assumptions $\beta=0$, $d\geq \rank Y$ and that the $x_i$ are distinct simplify the proof, the latter two ensuring that the outputs $Y$ are expressible through the features $\phi(X)$. See \cref{sec_app:ntk} for a formal description of the NTK regime and a proof of this theorem.

\subsection{Analysis of One-Step Performance}\label{sec:performance_theory}
\Cref{sec:loss_theory} uncovers theoretical insights on the modified loss $\loss^\star$. It shows non-convexity in function space, but which does not appear to be an issue, using the NTK regime as a case study. However, it does not say anything about the gains of optimizing $\loss^\star$ as opposed to $\loss$.

To investigate this, we compare the decrease in loss $\loss(W,\theta)-\loss(W_{GD},\theta)$ after a gradient step on the last layer with the decrease in loss $\loss(W,\theta)-\loss(W_{CF},\theta)$ after a closed-form step. The following result quantifies the superiority of closed-form steps.

\begin{theorem}\label{thm:one-step}
    Fix $W$ and $\theta$, and assume $\beta= 0$ and that $W\not\in \argmin_{\tilde W} \loss(\tilde W,\theta)$. Then for any learning rate $\eta>0$,
    \begin{equation}
        \frac{\loss(W,\theta)-\loss(W_{GD},\theta)}{\loss(W,\theta)-\loss(W_{CF},\theta)} \leq \left(1+ \frac{\Var_{\mathbb P}{(\sigma^2)}}{\E_{\mathbb P}[\sigma^2]^2} \right)^{-1} \leq 1
    \end{equation}
    
    where $W_{GD} := W-\eta\nabla \loss(W,\theta)$, $W_{CF} := W^\star(\theta)$ and $\mathbb P$ is a categorical distribution over the singular values $\sigma$ of $\phi_\theta(X)$, independent of $\eta$. If we write $\phi_\theta(X) = U\Sigma V^\top$ for its SVD and if all singular values are distinct, $\mathbb P$ is given by
    \begin{equation}
        \mathbb P(\sigma = \Sigma_{ii}) = \frac{\|((W\phi_\theta(X)-Y)V)_{\cdot i}\|^2}{\|(W\phi_\theta(X)-Y)V\|_F^2}.
    \end{equation}
\end{theorem}
The proof for \cref{thm:one-step} can be found in \cref{sec_app:one-step}. Intuitively, the more spread out the singular values of $\phi_\theta(X)$, the worse conditioned is the quadratic problem for $W$, the larger is the coefficient of variation $\frac{\sqrt{\Var_{\mathbb P}{(\sigma^2)}}}{\E_{\mathbb P}[\sigma^2]}$, and the worse is a gradient step compared to a closed-form step. This bound assumes optimal learning rate in GD, so in practice the discrepancy may be much greater.

\section{Experiments}\label{sec:experiments}
\label{sec:experiments}

\paragraph{Experimental setup.} We evaluate the performance of our proximal closed-form approach (\cref{eq:final-sgd-update}), denoted as ``\emph{$\ell_2$ c.f.~proximal ($\lambda)$}''. We compare this against a closed-form ridge regression baseline (\cref{eq:sgd-update}), ``\emph{$\ell_2$ c.f.~ridge ($\beta)$}'', and a standard ``\emph{$\ell_2$ loss}'' baseline optimized via \cref{eq:gd-update}.
We sweep over all the hyperparameters such as $\lambda$ and $\beta$ as well as the learning rate $\alpha$ for every method. We train models on a training set and we select best hyper-parameters using a validation set. The performance metrics are reported on the hold-out \emph{test} set, averaged over $3$ random seeds (with $95\%$ confidence intervals). See \cref{sec_app:experimental_details} for experimental details. We refer the reader to \cref{sec_app:additional_results} for additional experiments and ablations, and \cref{sec_app:wall_clock} for timings.

\paragraph{Quantum Chemistry regression.} The QM9 dataset~\citep{ramakrishnan2014quantum} consists of approximately 133,000 small organic molecules, each represented by 435-dimensional Coulomb Matrix features. The task is to predict 12 quantum chemical properties. We train all methods for 50 epochs with various batch sizes and report test-set mean squared error (MSE), using either SGD or Adam to optimize the backbone.

\begin{figure}[h]
\begin{center}
\includegraphics[width=5.3in]{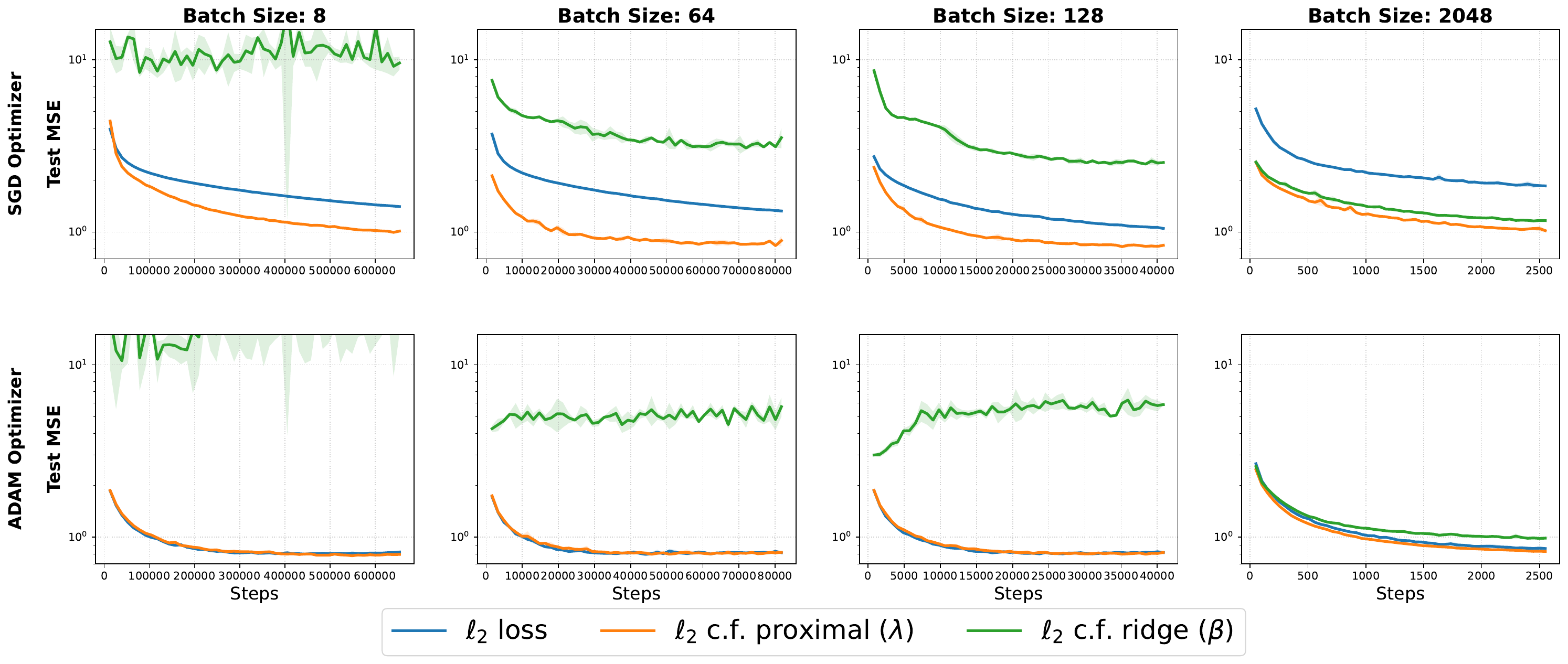}
\end{center}
\caption{\textbf{QM9 results}. X-axis is the number of iterations, Y-axis is a test set mean squared error (MSE), columns represents different batch sizes, rows represent different backbone optimizers. Different colors indicate different methods.}
\label{fig:qm9_results}
\end{figure}

Results are illustrated in \Cref{fig:qm9_results}. We observe that ``\emph{$\ell_2$ c.f.~ridge ($\beta)$}'' exhibits significant instability and fails to converge to a reasonable MSE at small batch sizes. Only at a batch size of $2048$ does it attain performance comparable to ``\emph{$\ell_2$ c.f.~proximal ($\lambda)$}''. This is expected: a large batch already approximates the full dataset well, narrowing the gap between the ridge and proximal formulations. In contrast, our proximal method ``\emph{$\ell_2$ c.f.~proximal ($\lambda)$}'', consistently outperforms the standard ``\emph{$\ell_2$ loss}'' when paired with SGD and performs comparably under Adam, demonstrating robustness to the choice of base optimizer.

\paragraph{Impact of $\lambda$.} We study sensitivity of ``\emph{$\ell_2$ c.f.~proximal ($\lambda)$}'' to the $\lambda$ parameter on QM9. Smaller batch sizes require larger values of $\lambda$ for stable training: intuitively, a stronger proximal penalty prevents the closed-form solution from overfitting to noisy small-batch statistics. For larger batch sizes, where the batch better approximates the full dataset, smaller values of $\lambda$ suffice. Overall, we find that moderately large $\lambda$ (around $10^3$) is beneficial across settings.

\begin{figure}[h]
\begin{center}
\includegraphics[width=3.3in]{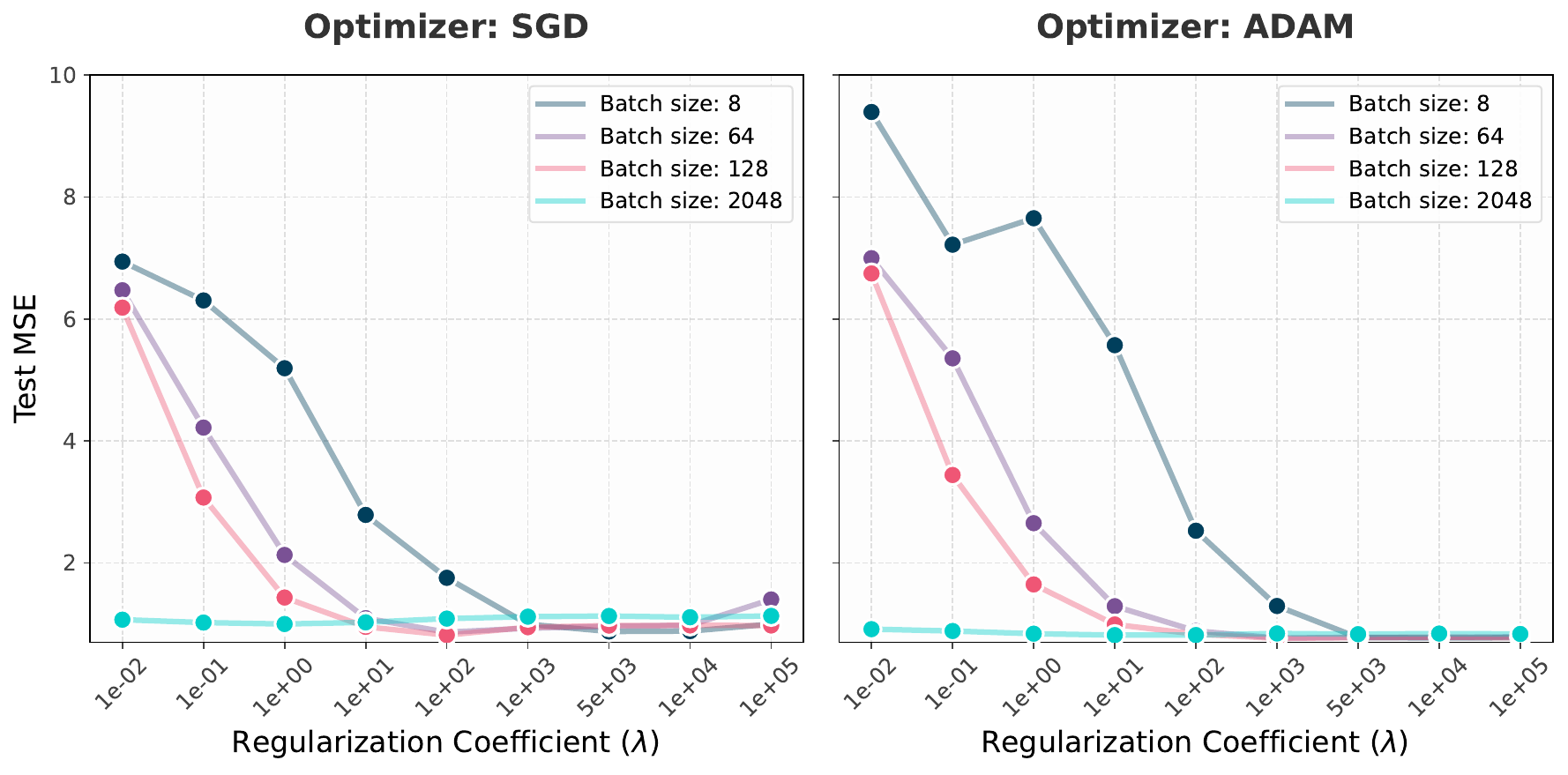}
\end{center}
\caption{\textbf{QM9, impact of $\lambda$}. X-axis is $\lambda$, Y-axis is the final test mean squared error. First column shows results with the SGD optimizer, while second column shows results with the Adam optimizer.}
\label{fig:qm9_results}
\end{figure}

\paragraph{Fourier Neural Operator (FNO) regression.}

We apply our approach to operator learning using a 1D Fourier Neural Operator (FNO)~\citep{fno} on the viscous Burgers equation
\begin{equation}
    \frac{\partial u}{\partial t} + u\frac{\partial u}{\partial x} = \nu \frac{\partial^2 u}{\partial x^2}
\end{equation}
on the periodic domain $\Omega = (0, 2 \pi)$ and viscosity $\nu=0.1$. Our dataset consists of $2048$ initial conditions $u(t=0, x)$ on a $N=8192$ resolution grid together with their solution at time one $u(t=1, x)$. The data is split into training ($1448$ points), validation ($200$ points) and test ($400$ points) sets. We train for $100$ epochs on the $32$-fold downsampled resolution grid ($256$ DoFs instead of $8192$). The last layer $W \in \mathbb{R}^{H \times 1}$ is shared across spatial dimensions; consequently, a batch of size $B$ on a grid of $256$ points yields an effective batch of $256B$ for the closed-form update. We report the mean squared error (MSE) on the whole $N=8192$ grid (super-resolution evaluation).

\begin{figure}[h]
\begin{center}
\includegraphics[width=5.3in]{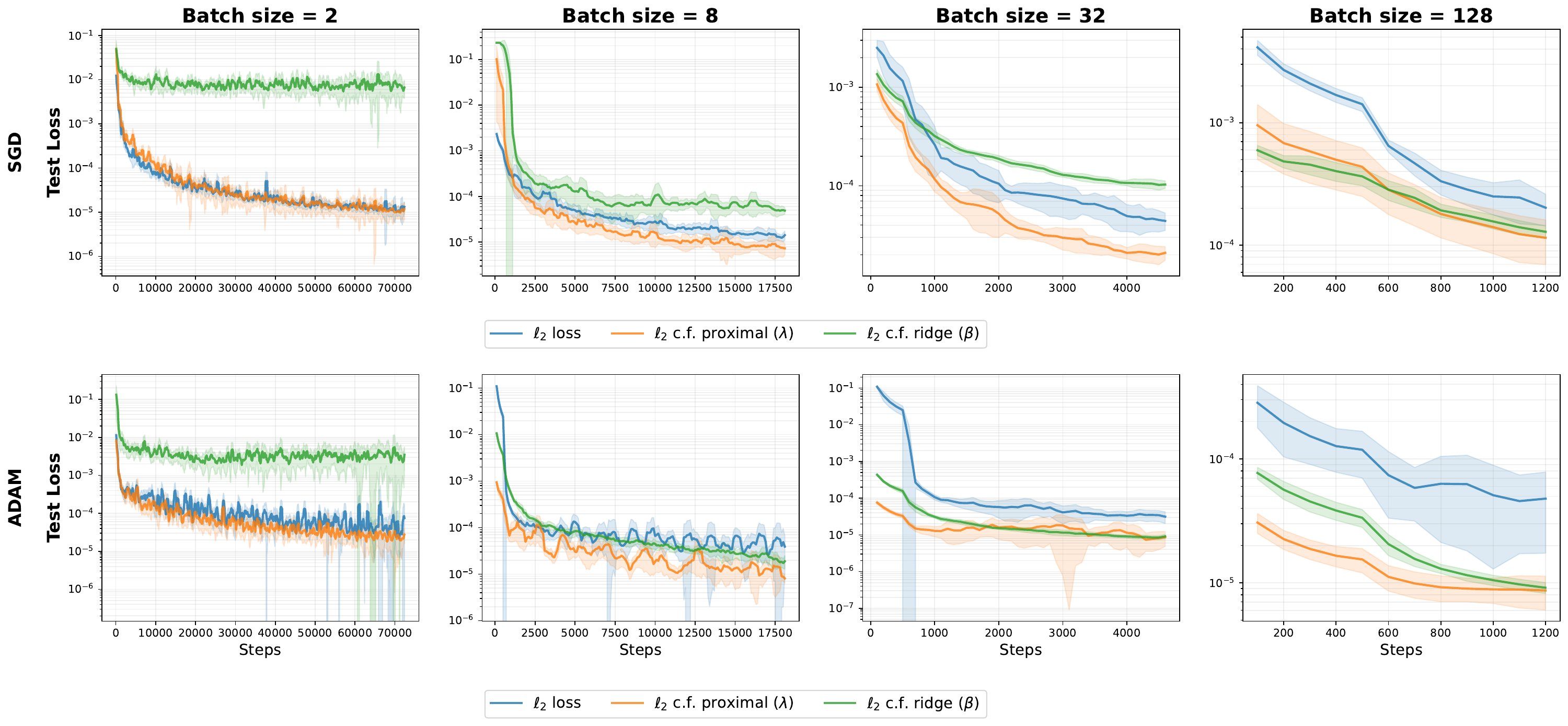}
\end{center}
\caption{\textbf{FNO results}. X-axis is the number of iterations, Y-axis is a test set mean squared error (MSE), columns represents different batch sizes. Different colors indicate different methods. We use a rolling average with window size $5$ to smooth the curves.}
\label{fig:regression_results}
\end{figure}

Results are shown in Figure~\ref{fig:regression_results}. Our approach ``\emph{$\ell_2$ c.f.~proximal ($\lambda)$}'' remains effective across batch sizes and optimizers. The ridge baseline ``\emph{$\ell_2$ c.f.~ridge ($\beta)$}'' fails to converge to a reasonable test loss at the smallest batch size (2) and is consistently outperformed by our method ``\emph{$\ell_2$ c.f.~proximal ($\lambda)$}'' at small batch sizes (8, 32) under SGD, while having similar performance only at larger batch sizes and with Adam optimizer (on batch sizes 8, 32, 128). We note that, as discussed above, the effective batch seen by the last layer is the nominal batch size multiplied by the number of spatial degrees of freedom (256), thus even a nominal batch size of $2$ corresponds to an effective batch size of $512$. This mirrors our findings on QM9: as the batch size grows, the mini-batch objective~(\ref{eq:l2_loss_batch}) increasingly approximates the full-dataset setting~(\ref{eq:old-loss}), diminishing the advantage of the proximal term.

\paragraph{Deep Feature Instrumental Variable (DFIV) regression.}

We evaluate our method in a causal inference setting using the two-stage DFIV framework~\citep{xu_learning_2020}; see~\cref{sec_app:dfiv} for background and details. We adapt DFIV to the minibatch setting and apply our proximal closed-form variant, which we call ``\emph{DFIV Proximal}''. 
For evaluation, we use two strategies. The first follows~\citep{xu_learning_2020} and re-estimates first-stage and second-stage last layers with ridge regression (we use $0.01$ coefficient for this) on the whole training set. The second strategy uses the current estimates of the last layers.

The results are given in~\cref{fig:causality_results_new}. For small batch sizes, our method outperforms standard DFIV, while achieving comparable performance in the large-batch regime. Notably, the online and re-estimated evaluations yield nearly identical results for our method, eliminating the need for a costly re-estimation pass over the full training set.


\begin{figure}[h]
\begin{center}
\includegraphics[width=5.3in]{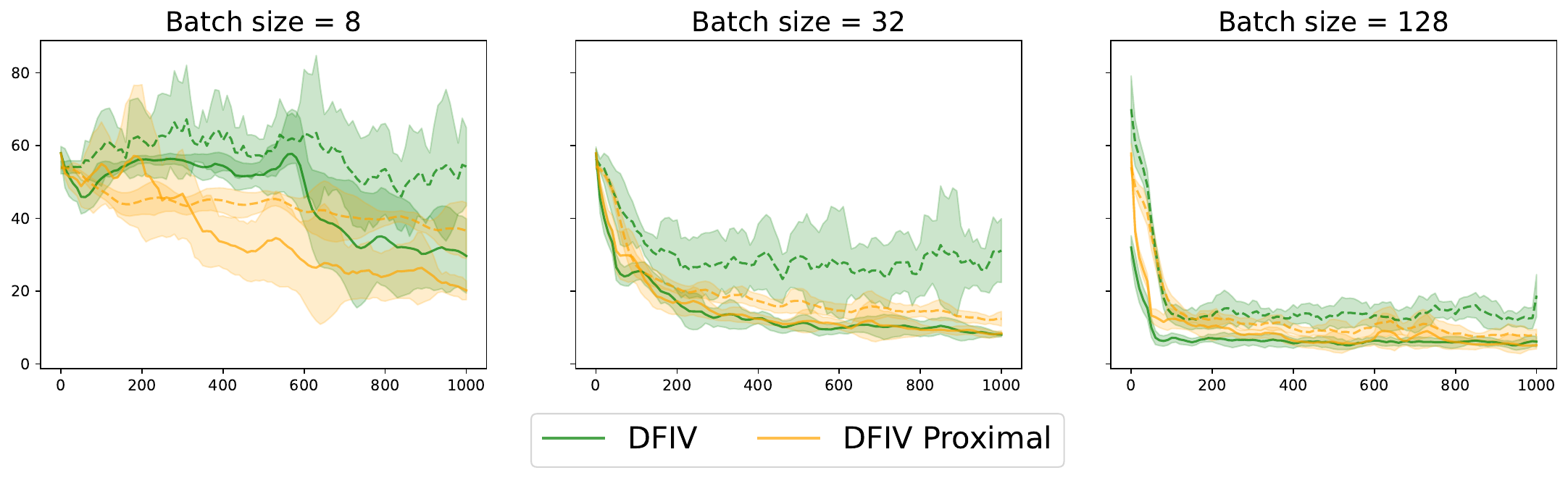}
\end{center}
\caption{\textbf{DFIV results}. X-axis is the number of iterations, Y-axis is a test set MSE. Each column corresponds to a different batch size. Different colors indicate different methods. Solid lines use the last layer re-estimated on the entire training set, while dashed lines use current last layer estimates. We use a rolling average with window size $5$ to smooth the curves.}
\label{fig:causality_results_new}
\end{figure}

\paragraph{Further results.} In \cref{sec_app:additional_results,sec_app:wall_clock} we conduct additional experiments, notably we record timings (Appendix~\ref{sec_app:wall_clock}), and classification tasks (\cref{sec_app:classification_results}). For such tasks, ``\emph{$\ell_2$ loss}'' usually underperforms ``\emph{Cross Entropy}'' but, surprisingly, ``\emph{$\ell_2$ c.f.~proximal ($\lambda)$}'' achieves comparable performance to ``\emph{Cross Entropy}'' on datasets with small to moderate number of classes.

\section{Conclusion}
\label{sec:conclusion}

We proposed leveraging closed-form optimal solutions for the last layer of neural networks under squared loss as an integral part of optimization. 

Theoretically, we showed that the closed-form approach is well-founded in the neural tangent kernel regime, and bounded its superiority over gradient steps in a one-step analysis.

On regression tasks—including quantum chemistry (QM9), PDE operator learning (FNO/Burgers), and causal inference (DFIV)—our proximal closed-form approach consistently accelerates training and matches or outperforms standard gradient-based optimization of the squared loss. Compared to naive closed-form ridge regression, our proximal formulation is substantially more robust, particularly in the small-batch regime. The DFIV results are especially compelling: the closed-form solution is naturally motivated by the two-stage structure, and our method eliminates the need for costly re-estimation on the full training set.

When naively applied to classification, our method based on the squared loss achieves performance comparable to SGD on the cross entropy loss for tasks with a small to moderate number of classes. We observe it underperforms for tasks with a large number of classes. Indeed, a limitation of our work is that it only applies to the squared loss. Adapting it to cross entropy presents a promising area of future work, so that it may be successfully applied to more scenarios.

In addition, in future work we aim to apply our method to larger-scale two-stage problems such as offline reinforcement learning~\citep{chen_instrumental_2022} and proxy variable regression~\citep{xu_deep_2021}. We also aim to use a dimension-dependent regularization vector $\lambda$, and schedules for it that adapt over the course of training, which could mirror the strategies from adaptive optimizers such as Adam.





\bibliographystyle{plain}
\bibliography{neurips2026/bibliography}

\appendix

\newpage

\startcontents[sections]
\printcontents[sections]{l}{1}{\setcounter{tocdepth}{2}}
\counterwithin{figure}{section}
\counterwithin{table}{section}

\newpage

\section{Derivations for the Closed-Form Solutions}
In this appendix we derive the closed form formulas for $W^\star(\theta)$ (\cref{eq:closed-form-w}) and $W^\star_{\batch_t,W_t}(\theta)$ (\cref{eq:closed-form-proximal}).
\subsection{Ridge Loss}\label{sec_app:ridge_derivation}
Recall \cref{eq:old-loss}:
\begin{equation}
    \begin{aligned}
        \loss(W,\theta) &= \sum_{i=1}^n\|y_i - W\phi_\theta(x_i)\|_2^2 + \beta\|W\|^2_F \\
        &=\|Y-W\phi_\theta(X)\|^2_F + \beta\|W\|^2_F
    \end{aligned}
\end{equation}
which we rewrote in matrix form, $\phi_\theta(X) \in \R^{d\times n}$, $Y\in \R^{o\times n}$, $W\in \R^{o\times d}$. Differentiating with respect to $W$ we get
\begin{equation}
    \begin{aligned}
        \nabla_W \loss(W,\theta) &= -2(Y-W\phi_\theta(X))\phi_\theta(X)^\top +2\beta W \\
        &= 2W\left(\phi_\theta(X)\phi_\theta(X)^\top +\beta I\right) - 2Y\phi_\theta(X)^\top.
    \end{aligned}
\end{equation}
Solving for $W^\star$ such that $\nabla_W\loss(W^\star,\theta) = 0$ we get the unique solution
\begin{equation}
    W^\star = Y\phi_\theta(X)^\top\left(\phi_\theta(X)\phi_\theta(X)^\top +\beta I\right)\inv.
\end{equation}

\subsection{Proximal Loss}\label{sec_app:proximal_derivation}
Recall \cref{eq:l2_prox}:
\begin{equation}
    \begin{aligned}
        \loss^{\prox}_{\batch_t,  W_t} (W, \theta) &= \sum_{(x_i,y_i)\in \batch_t} \|y_i - W\phi_\theta(x_i)\|_2^2 + \lambda\|W -  W_t\|_F^2 \\
        &=\|Y_t-W\phi_\theta(X_t)\|^2_F + \lambda\|W-W_t\|^2_F.
    \end{aligned}
\end{equation}
in matrix form, where $X_t$ and $Y_t$ correspond to the matrix of inputs and outputs respectively of batch $\batch_t$. As in \cref{sec_app:ridge_derivation}, we compute the gradient with respect to $W$:
\begin{equation}
    \begin{aligned}
        \nabla_W \loss^{\prox}_{\batch_t,  W_t}(W,\theta) &= -2(Y_t-W\phi_\theta(X_t))\phi_\theta(X_t)^\top +2\lambda (W-W_t) \\
        &= 2W\left(\phi_\theta(X_t)\phi_\theta(X_t)^\top +\lambda I\right) - 2Y_t\phi_\theta(X_t)^\top -2\lambda W_t.
    \end{aligned}
\end{equation}
Solving for $W^\star$ such that $\nabla_W\loss^{\prox}_{\batch_t,  W_t} (W^\star, \theta) = 0$ we get the unique solution
\begin{equation}
    W^\star = \left(Y_t\phi_\theta(X_t)^\top+  \lambda W_t\right)\left(\phi_\theta(X_t)\phi_\theta(X_t)^\top+\lambda I\right)\inv.
\end{equation}

\subsection{Generalized Ridge Loss}\label{sec_app:gen_derivation}
\cite{hui2021evaluation} proposes to include additional hyperparameters within the squared loss objective in order to improve the performance on classification tasks. Namely, if $c$ is the correct class for datapoint $x$, consider the loss
\begin{equation}
    \ell(W,\theta) = k \cdot ([W\phi_{\theta}(x)]_c - m)^2 + \sum_{j=1, j \neq c}^{C} [W\phi_{\theta}(x)]_j^2
\end{equation}
for some hyperparameters $k,m>0$. $\loss^{\operatorname{gen}}(W,\theta)$ is then given by the sum of the $\ell(W,\theta)$ for all training datapoints $x_i$ plus a regularizer. In this appendix we use a ridge regularizer $\beta\|W\|^2_F$, though we may use a proximal regularizer instead.

Let $W_{c\cdot}$, $Y_{c\cdot}$ be the $c$\textsuperscript{th} row of the last layer, labels respectively, which we view both as column vectors. Let $z^{(c)}\in \R^n$ be the vector with entry $z^{(c)}_i = \sqrt{k}$ if $Y_{ci} = 1$, $z^{(c)}_i = 1$ otherwise. That is, $z^{(c)} = \mathbf{1} + (\sqrt{k}-1)Y_{c\cdot}$, where $\mathbf{1}\in\R^n$ is the vector with ones in all entries. Finally, let $\odot$ denote the Hadamard product. We define the loss `restricted' to the $c$\textsuperscript{th} class as:

\begin{equation}
    \loss^{\operatorname{gen}}_c(W_{c\cdot},\theta) = \left[ z^{(c)} \odot \left( m Y_{c\cdot} -  \phi_\theta(X)^\top W_{c\cdot} \right) \right]^\top
    \left[ z^{(c)} \odot \left( m Y_{c\cdot} - \phi_\theta(X)^\top W_{c\cdot} \right) \right]
    + \beta \, W_{c\cdot}^\top W_{c\cdot}.
\end{equation}

$\loss^{\operatorname{gen}}(W,\theta)$ is the sum over the $\loss^{\operatorname{gen}}_c(W_{c\cdot},\theta)$. Differentiating only with respect to the $c$\textsuperscript{th} row of $W$,

\begin{equation}
    \begin{aligned}
        \nabla_{W_{c\cdot}}\loss^{\operatorname{gen}}(W,\theta) &= \nabla_{W_{c\cdot}}\loss^{\operatorname{gen}}_c(W_{c\cdot},\theta) \\
        &= -2 \phi_\theta(X)\left( (z^{(c)})^2 \odot \left( m Y_{c\cdot} - \phi_\theta(X)^\top W_{c\cdot} \right) \right) + 2\beta W_{c\cdot} \\
        &= 2\left(\phi_\theta(X)\left((z^{(c)})^2 \odot \phi_\theta(X)\right)^\top +\beta I\right)W_{c\cdot} - 2\phi_\theta(X) \left((z^{(c)})^2 \odot mY_{c\cdot}\right) \\
        &= 2\left(\left(z^{(c)}\odot\phi_\theta(X)\right)\left(z^{(c)}\odot \phi_\theta(X)\right)^\top +\beta I\right)W_{c\cdot} \\
        &\quad- 2 \left(z^{(c)}\odot\phi_\theta(X)\right) \left(z^{(c)} \odot mY_{c\cdot}\right) \\
        &= 2\left(\phi^{(c)}_\theta(X)\phi^{(c)}_\theta(X)^\top +\beta I\right)W_{c\cdot} - 2\phi^{(c)}_\theta(X) Y^{(c)}_{c\cdot}
    \end{aligned}
\end{equation}

where the square over $z^{(c)}$ is taken component-wise, $\phi^{(c)}_\theta(X):= z^{(c)}\odot\phi_\theta(X)$ and $Y^{(c)}_{c\cdot} := z^{(c)} \odot mY_{c\cdot}$.

Solving for $W^\star_{c\cdot}$ such that $\nabla_{W_{c\cdot}}\loss^{\operatorname{gen}}_c(W,\theta)=0$ we get
\begin{equation}\label{eq:sol_gen_loss}
    \begin{aligned}
        W_{c\cdot} = \left( \phi_\theta^{(c)}(X) \phi^{(c)}_\theta(X)^\top + \beta I \right)^{-1}
     \phi^{(c)}_\theta(X) Y^{(c)}_{c\cdot}.
    \end{aligned}
\end{equation}
\Cref{eq:sol_gen_loss} presents a computational bottleneck for large number of classes, since it requires a different matrix inversion per class.

\section{Kalman Filter Interpretation}\label{sec_app:kalman}
We can interpret the updates on the last layer in \cref{eq:final-sgd-update} as a Kalman filter under several simplifying assumptions. We take the Bayesian point of view, where we treat the last layer $W_t$ at time $t$ given the feature parameters $\theta_t$ as a random variable. See also~\citep{titsias2024kalman} for a similar discussion.

First, we assume that the model fits the data perfectly during optimization, so we have the likelihood
\begin{equation}\label{eq:kalman-likelihood}
    p(y_i \mid x_i, W_t, \theta_t) = \mathcal N(y_i\mid W_t\phi_{\theta_t}(x_i), \sigma^2_YI)
\end{equation}
where $I$ is the identity matrix and $\sigma^2_Y$ is some hyperparameter controlling the variance of the outputs and $(x_i,y_i)\in\batch_t$.

Next, we assume $W_t$ evolves like a random walk with Gaussian steps, as the parameters evolve through time $\theta_t$, so
\begin{equation}\label{eq:kalman-dynamics}
    p(W_{t+1}\mid W_t) = \mathcal N(W_{t+1}\mid W_t, \sigma^2_WI)
\end{equation}
where $\sigma^2_W$ is some hyperparameter controlling the variance of the steps.

\Cref{eq:kalman-likelihood,eq:kalman-dynamics} provide us a way to update our belief about $W_t$ in closed-form through time. Namely our belief about $W_t$ given our observations $\batch_s$ for $s< t$ will be a Gaussian distribution $\mathcal N( W_t, \Sigma_t)$, obtained by \emph{Kalman filtering} \citep{sarkka_bayesian_2013}. Here, note that $\Sigma_t$ will be a $od \times od$ matrix, i.e.~will be squared times the number of parameters in the last layer. For large last layers, this can be intensive to store and manipulate.

Instead, we propose an additional simplifying approximation: we ignore the covariance $\Sigma_t$ at each step, and collapse our belief over $W_{t+1}$ to its maximum-a-posteriori estimate. The resulting update on our point estimate of the last layer is given by:
\begin{equation}
    \begin{aligned}
         W_{t+1} = \argmin_{W\in \R^{o\times d}}&-\sum_{(x_i,y_i)\in\batch_t} \log p(y_i \mid x_i,W,\theta_t)-\log p(W\mid  W_t) \\
        = \argmin_{W\in \R^{o\times d}} &-\sum_{(x_i,y_i)\in\batch_t}\log \mathcal N(y_i\mid W\phi_{\theta_t}(x_i), \sigma^2_YI)-\log \mathcal N(W\mid  W_t, \sigma^2_WI) \\
        = \argmin_{W\in \R^{o\times d}} &\sum_{(x_i,y_i)\in \batch_t} \frac{1}{2\sigma_Y^2}\|y_i - W\phi_{\theta_t}(x_i)\|_2^2 + \frac{1}{2\sigma_W^2}\|W -  W_t\|_F^2 \\
        = W^\star_{\mathcal B_t, W_t}(&\theta)
    \end{aligned}
\end{equation}
with $\lambda= \frac{\sigma_Y^2}{\sigma_W^2}$ in \cref{eq:closed-form-proximal}. That is, such approximate Bayesian updates recovers precisely the minimum of the proximal loss \cref{eq:closed-form-proximal}, leading to the updates on $ W_t$ as in \cref{eq:new-sgd-update,eq:final-sgd-update}. 

\section{Detailed Theoretical Analyses}\label{sec_app:theory}
This appendix contains the analysis and proofs for \cref{sec:theory}.

\subsection{Analysis of the Modified Loss}\label{sec_app:modified_loss}
In this appendix we prove \cref{thm:non-convex}.

To begin, consider the case $\beta>0$. Spelling out the expression for $\loss_\fsp^\star$ in matrix form,

\begin{equation}
    \begin{aligned}
        \loss^\star_\fsp(\phi) &:= \sum_{i=1}^n\|y_i - W^\star_\fsp(\phi)\phi(x_i)\|_2^2+\beta\|W^\star_\fsp(\phi)\|^2_F \\
        &= \|Y- W^\star_\fsp(\phi) \phi(X)\|_F^2+\beta\|W^\star_\fsp(\phi)\|^2_F \\
        &= \tr\left((Y- W^\star_\fsp(\phi) \phi(X))^\top (Y- W^\star_\fsp(\phi) \phi(X))\right)+\beta\tr\left(W^\star_\fsp(\phi)^\top W^\star_\fsp(\phi)\right)
    \end{aligned}
\end{equation}
where recall, $\phi(X)\in \R^{d\times n}$, $Y\in \R^{o\times n}$ and $W^\star_\fsp(\phi) := Y\phi(X)^\top\left(\phi(X)\phi(X)^\top+\beta I\right)\inv \in \R^{o\times d}$.

The derivative of $\loss_\fsp^\star$ at $\phi$ is a linear map $\D \loss_\fsp^\star(\phi)\colon\fsp \to \R$. Just as in \cref{thm:gradient-equiv}, to calculate $\D \loss_\fsp^\star(\phi)$ we do not need to differentiate $W^\star_\fsp(\phi)$ with respect to $\phi$, and may treat it as constant instead. So for $\psi\in\fsp$,
\begin{equation}
    \D \loss_\fsp^\star(\phi) [\psi] = \tr\big(\underbrace{-2(Y- W^\star_\fsp(\phi) \phi(X))^\top W^\star_\fsp(\phi)}_{=:\nabla\loss^\star_\fsp(\phi)^\top} \psi(X)\big).
\end{equation}
The definition of the gradient $\nabla\loss^\star_\fsp(\phi) \in \R^{d\times n}$ is just a stand-alone notation used for convenience as -- without an inner product on $\fsp$ -- we do not have a well defined notion of gradients for the functional $\loss^\star_\fsp$. Importantly, note that $\phi^\star$ is a critical point of $\loss^\star_\fsp$ if and only if $\D \loss_\fsp^\star(\phi^\star)=0$, i.e.~if and only if $\D \loss_\fsp^\star(\phi^\star) [\psi] =0$ for all $\psi\in \fsp$, i.e.~if and only if $\nabla\loss^\star_\fsp(\phi^\star) = 0$.

Plugging in the expression for $W^\star_{\fsp}(\phi)$ in the definition of $\nabla\loss^\star_\fsp(\phi)$ and writing $\Phi:=\phi(X)\in \R^{d\times n}$, $\Phi^+_\beta := \Phi^\top(\Phi\Phi^\top +\beta I)\inv \in \R^{n\times d}$, we get
\begin{equation}\label{eq:funct-grad}
    \begin{aligned}
    \nabla\loss^\star_\fsp(\phi)&= 2W^\star_{\fsp}(\phi)^\top (W^\star_{\fsp}(\phi)\phi(X) - Y) \\
    &= 2(\Phi\Phi^\top +\beta I)\inv\Phi Y^\top Y(\Phi^\top(\Phi\Phi^\top +\beta I)\inv\Phi-I) \\
    &= 2(\Phi^+_\beta)^\top Y^\top Y(\Phi^+_\beta\Phi-I).
    \end{aligned}
\end{equation}
Define
\begin{equation}
    Y_\star = Y\Phi^+\Phi,\quad Y_\perp = Y(I-\Phi^+\Phi).
\end{equation}
where $\Phi^+ \in \R^{n\times d}$ is the Moore-Penrose pseudo-inverse of $\Phi$. In particular
\begin{equation}
    Y = Y_\star + Y_\perp.
\end{equation}
$Y_\star$ should be thought of the part of the outputs `attainable' by the features $\Phi$, and $Y_\perp$ the part of the outputs `unattainable'. To see this, take a singular value decomposition (SVD) of $\Phi$ of the form $\Phi = U\Sigma V^\top$ where $U\in \R^{d\times d}$ has orthonormal columns, $\Sigma \in \R^{d\times d}$ is diagonal with the first $r := \rank \Phi$ diagonal entries being non-zero and $V\in \R^{n \times d}$ has orthonormal columns. Moreover write $\Vsp_\star \leq \R^n$ for the subspace spanned by the rows of $\Phi$, or equivalently the first $r$ columns of $V$, the space of `attainable' outputs. Further let $\Vsp_\perp\leq \R^n$ its orthogonal complement, the space of `unattainable' outputs. We have $\Phi^+\Phi = V\Lambda V^\top$ the orthogonal projection onto $\Vsp_\star$, where $\Lambda_{ij} =1$ for $i=j\leq r$, $\Lambda_{ij} =0$ otherwise. Therefore the rows of $Y_\star$ are the orthogonal projection of those of $Y$ onto $\Vsp_\star$, and those of $Y_\perp$ are the orthogonal projection onto $\Vsp_\perp$.

Now note that
\begin{equation}
    \begin{aligned}
        (\Phi^+_\beta)^\top Y^\top &= (\Phi\Phi^\top +\beta I)\inv\Phi Y^\top\\
        &= U(\Sigma^2 +\beta I)\inv \Sigma V^\top Y^\top \\
        &= U(\Sigma^2 +\beta I)\inv \Sigma V^\top Y_\star^\top \\
        &= (\Phi^+_\beta)^\top Y_\star^\top.
    \end{aligned}
\end{equation}
So from \cref{eq:funct-grad},
\begin{equation}\label{eq:funct-grad-compressed}
    \nabla\loss^\star_\fsp(\phi) = 2(\Phi^+_\beta)^\top Y^\top_\star Y(\Phi^+_\beta\Phi-I).
\end{equation}
We see for example that $\phi = 0$, or more generally whenever the rows of $\Phi= \phi(X)$ are orthogonal to the rows of $Y$, we have $Y_\star = 0$ and $Y_\perp = Y$, so is a critical point of $\loss^\star_\fsp$, even though it is not a global minimizer, assuming $Y\neq 0$.

When $Y\neq 0$, $\loss^\star_\fsp$ admits critical points which are not global minima thus it is not convex, so this concludes the proof of \cref{thm:non-convex} for $\beta>0$.

\begin{remark}
    To minimize $\loss^\star_\fsp(\phi)$ in the case $\beta>0$, one must express the rows of $Y$ (all, or as many as possible) with the span of the rows of $\Phi$, while at the same time taking $\Sigma_{ii}\to 0$ for all $i$ or $\Sigma_{ii}\to\infty$ for some $i$, to ensure that $\|W^\star_\fsp(\phi)\|_F^2 = \|Y V \Sigma(\Sigma^2 +\beta I)\inv U^\top\|_F^2 \to 0$. The loss does not attain a global minimum.
\end{remark}

For $\beta = 0$, the minimizer $W^\star_\fsp(\phi)$ of \cref{eq:loss-fsp} is not unique, so we make the natural choice $W^\star_\fsp(\phi) = Y\Phi^+$. The Moore-Penrose pseudo-inverse operation is not continuous and hence not differentiable. But it is differentiable on any manifold of constant rank $\Phi$, and hence so is $\loss^\star_\fsp$. Specifically (c.f.~\cref{eq:funct-grad,eq:funct-grad-compressed})
\begin{equation}
    \nabla\loss^\star_\fsp(\phi) = 2(\Phi^+)^\top Y^\top Y(\Phi^+\Phi-I) = -2(\Phi^+)^\top Y^\top_\star Y_\perp.
\end{equation}

As for $\beta>0$, any $\phi$ where the rows of $\Phi$ are orthogonal to the rows of $Y$ is a critical point of $\loss^\star_\fsp$. When $Y\neq 0$, this is not a global minimizer. \cref{thm:non-convex} thus also follows in the case $\beta=0$.

\begin{remark}\label{rk:minimizers}
    When $d \geq \rank Y$ and the $x_i$ are distinct, that is when the outputs $Y$ are expressible through the features $\phi(X)$, the minimizers correspond exactly to the points $\phi$ such that $Y_\perp = 0$ and $Y_\star = Y$.
\end{remark}

\begin{remark}
    A result such as \cref{thm:gradient-equiv} cannot be extended to second derivatives, otherwise we could differentiate $\loss^\star_\fsp$ twice by keeping $W^\star_\fsp(\phi)$ constant, and would obtain that the Hessian of $\loss^\star_\fsp$ is positive semi-definite since so is the one of the squared loss. But this is impossible since we showed that $\loss^\star_\fsp$ is not convex.
\end{remark}

\subsection{Neural Tangent Kernel Infinite Width Limit}\label{sec_app:ntk}
In this appendix we provide a self-contained overview of the neural tangent kernel (NTK) limit, based on \cite{jacot_neural_2018}, and then prove \cref{thm:ntk}.

$\phi_\theta$ is assumed to be a neural network, $\theta$ are its parameters consisting of weights $W^{(\ell)}$ and biases $b^{(\ell)}$, such that $\phi_\theta(x) = \alpha^{(L)}_\theta(x)$, the pre-activations $\tilde\alpha^{(\ell)}_\theta(x) \colon \R^{d_0}\to \R^{d_\ell}$ the activations $\alpha^{(\ell)}_\theta(x) \colon \R^{d_0}\to \R^{d_\ell}$, $d_L = d$ and
\begin{equation}\label{eq:ntk-architecture}
    \begin{aligned}
        \alpha^{(0)}_\theta(x) & =x \\
        \tilde{\alpha}^{(\ell+1)}_\theta(x) & =\frac{1}{\sqrt{d_{\ell}}} W^{(\ell)} \alpha^{(\ell)}_\theta(x)+b^{(\ell)} \\
        \alpha^{(\ell)}_\theta(x) & =\sigma\left(\tilde{\alpha}^{(\ell)}_\theta(x)\right),
    \end{aligned}
\end{equation}
where $\sigma\colon \R \to \R$ is a twice differentiable non-linearity function with bounded second derivative, applied element-wise. The parameters $\theta$ are initialized at time $t=0$ with $W_{ij}^{(\ell)} \sim \mathcal N(0, 1)$, $b_j^{(\ell)} \sim \mathcal N(0,1)$ which, combined with the pre-multiplicative factors $1/\sqrt{d_\ell}$ in \cref{eq:ntk-architecture}, corresponds to the LeCun initialization (see \cref{subsec:numerical_considerations}).

We then consider gradient flow on the loss $\loss^\star_\fsp$:
\begin{equation}\label{eq:gradient-flow}
    \frac{\dv\theta}{\dv t} = -\nabla_{\theta}\loss^\star_\fsp(\theta).
\end{equation}


Consider the limit $d_1,\dots, d_{L-1} \to \infty$ sequentially, that is we first take $d_1\to \infty$, then $d_2\to\infty$, etc.~up to $d_{L-1}\to\infty$. In this limit, under some integrability assumption (see \cref{rk:stochastic-bded}), the dynamics of the function $\phi=\phi_\theta$ under \cref{eq:gradient-flow} are given by kernel gradient descent: for $x\in \R^{d_0}$, $\beta =0$,
\begin{equation}\label{eq:kernel-gd}
    \begin{aligned}
        \frac{\dv\phi(x)}{\dv t} &= -\sum_{i=1}^n k(x,x_i)
        \nabla \loss^\star_\fsp (\phi)_i \\
        &= 2(\Phi^+)^\top Y^\top_\star Y_\perp k(X,x)
    \end{aligned}
\end{equation}
where we used \cref{eq:funct-grad-compressed}, and $k$ is a positive semi-definite kernel $k\colon \R^{d_0}\times \R^{d_0} \to \R$, the \emph{neural tangent kernel} \citep[Theorem 1 \& 2]{jacot_neural_2018}, $k(X,x) \in \R^n$. Whenever this kernel is positive definite (that is for all $x_1,\dots,x_N\in \R^{d_0}$, the matrix $(k(x_i,x_j))_{i,j}$ is positive definite; see for example \citep[Proposition 2]{jacot_neural_2018}) we know that, as $t\to\infty$, $\phi$ will converge pointwise to a critical point $\phi^\star$ of $\loss^\star_\fsp$. The goal of \cref{thm:ntk} is to show that, almost surely in the initialization, $\phi^\star$ will be a global minimum of $\loss^\star_\fsp$. In other words, under the assumption that $d \geq \rank Y$ and that the $x_i$ are distinct, we want to show $Y_\star \to Y$ and $Y_\perp \to 0$ as $t\to \infty$ (see \cref{rk:minimizers}).

From \cref{eq:kernel-gd}, we see that

\begin{equation}
    \frac{\dv\Phi}{\dv t} = 2(\Phi^+)^\top Y^\top_\star Y_\perp K 
\end{equation}
where $K := k(X,X)\in \R^{n\times n}$.

First, consider the case where $d\leq n$. At initialization, the infinite width neural network is a Gaussian process \citep[Proposition 1]{jacot_neural_2018}. If the corresponding \emph{neural Gaussian process kernel} (NGPK) is positive definite (see for example \citep[Theorem 4.5]{gao_wide_2023}) then the distribution of the columns of $\Phi$ corresponding to distinct $x_i$ follow a non-degenerate Gaussian at initialization ($t=0$). So, since $d\leq n$, $\Phi$ has full row rank $d$ at $t=0$ almost surely. Thus, in that case, $\Phi^+ = \Phi^\top(\Phi\Phi^\top)\inv$. So

\begin{equation}
\begin{aligned}
    \frac{\dv Y_\star}{\dv t} &= \frac{\dv}{\dv t} \left(Y \Phi^\top (\Phi\Phi^\top)\inv\Phi \right) \\
    &= Y \frac{\dv \Phi^\top}{\dv t} (\Phi\Phi^\top)\inv\Phi - Y \Phi^\top (\Phi\Phi^\top)\inv\frac{\dv \Phi}{\dv t}\Phi^\top(\Phi\Phi^\top)\inv\Phi \\
    &\quad- Y\Phi^\top (\Phi\Phi^\top)\inv\Phi\frac{\dv \Phi^\top}{\dv t}(\Phi\Phi^\top)\inv\Phi + Y\Phi^\top (\Phi\Phi^\top)\inv\frac{\dv \Phi}{\dv t} \\
    &= 2Y KY_\perp^\top Y_\star\Phi^\top(\Phi\Phi^\top)^{-2} \Phi - 2Y\Phi^\top(\Phi\Phi^\top)^{-2} \Phi Y_\star^\top Y_\perp K \Phi^\top (\Phi\Phi^\top)\inv \Phi \\
    &\quad - 2Y\Phi^\top(\Phi\Phi^\top)\inv\Phi KY_\perp^\top Y_\star\Phi^\top (\Phi\Phi^\top)^{-2} \Phi+ 2Y \Phi^\top(\Phi\Phi^\top)^{-2} \Phi Y_\star^\top Y_\perp K.
\end{aligned}
\end{equation}
Hence
\begin{equation}
    \begin{aligned}
        \frac{\dv Y_\star}{\dv t}Y_\star^\top &= 2Y KY_\perp^\top Y_\star\Phi^\top(\Phi\Phi^\top)^{-2} \Phi Y_\star^\top - 2Y\Phi^\top(\Phi\cancel{\Phi^\top)^{-2} \Phi Y_\star^\top Y_\perp K \Phi^\top (\Phi}\Phi^\top)\inv \Phi Y_\star^\top \\
        &\quad - 2Y\Phi^\top(\Phi\Phi^\top)\inv\Phi KY_\perp^\top Y_\star\Phi^\top (\Phi\Phi^\top)^{-2} \Phi Y_\star^\top+ 2Y \cancel{\Phi^\top(\Phi\Phi^\top)^{-2} \Phi Y_\star^\top Y_\perp K} Y_\star^\top \\
        &= 2Y(I-\Phi^\top(\Phi\Phi^\top)\inv\Phi) KY_\perp^\top Y_\star\Phi^\top(\Phi\Phi^\top)^{-2} \Phi Y_\star^\top \\
        &= 2\underbrace{\left(Y_\perp KY_\perp^\top\right)}_{=:A} \underbrace{\left(Y_\star\Phi^\top(\Phi\Phi^\top)^{-2} \Phi Y_\star^\top\right)}_{=:B}.
    \end{aligned}
\end{equation}
So
\begin{equation}\label{eq:evolution_y*}
    \frac{\dv}{\dv t}\|Y_\star\|_F^2 =2\tr\left( \frac{\dv Y_\star}{\dv t}Y_\star^\top\right) = 2\tr(AB)
\end{equation}

At $t=0$, $\dim \Vsp_\star =\rank \Phi = d$ almost surely. Since by assumption $d \geq \rank Y$, projecting the rows of $Y$ onto $\Vsp_\star$ we get that $\rank Y_\star = \rank Y$ almost surely at $t=0$. Moreover $B = Y_\star \Phi^\top (\Phi\Phi^\top)^{-2} \Phi Y_\star^\top = Y_\star V\Sigma^{-2}V^\top Y_\star^\top$ in the SVD notation from before, and $\Sigma_{ii}>0$ for all $i$ almost surely. So $B$ is positive definite almost surely. $A$ is positive semi-definite so $\tr(AB)\geq 0$, and hence by \cref{eq:evolution_y*}, $\|Y_\star\|_F^2$ is non-decreasing through time. It converges when $A = 0$, i.e.~$Y_\perp =0$, which corresponds to global minima of $\loss^\star_\fsp$. $Y_\star$ is guaranteed to converge when the NTK is positive definite.

We see in addition that, since the NTK is the sum of the NGPK with some other positive semi-definite kernel \citep[Theorem 1]{jacot_neural_2018}, the NGPK being positive definite implies that the NTK is too.

For the case $n< d$, the optimization becomes trivial almost surely. Indeed, at $t=0$, $\dim \Vsp_\star =\rank \Phi = n$ almost surely. In other words $\Vsp_\star = \R^n$ and $\Vsp_\perp = \{0\}$, and thus $Y_\star =  Y$ and $Y_\perp = 0$ almost surely at $t=0$.

\begin{remark}\label{rk:stochastic-bded}
    To show that the infinite width limit leads to kernel gradient descent as in \cref{eq:kernel-gd}, one needs to assume that, for any $T>0$, the random variable (over the random initialization on $\theta$) given by the integral
    \begin{equation}
        \int_0^T \left\|\nabla\loss^\star_\fsp(\phi_\theta)\right\|_F \mathrm{d}t
    \end{equation}
    stays stochastically bounded as $d_1,\dots,d_{L-1}\to \infty$ sequentially \citep[Theorem 2]{jacot_neural_2018}.
\end{remark}

\subsection{Analysis of One-Step Performance}\label{sec_app:one-step}
In this appendix we prove \cref{thm:one-step}.

Taking a second order Taylor expansion of the loss around $W$:
\begin{equation}\label{eq:taylor-loss}
    \loss(W_{GD},\theta) = \loss(W,\theta)-\eta \|\nabla_W \loss(W,\theta)\|^2_F + \frac{\eta^2}{2}H_W \loss(W,\theta)\left[\nabla_W \loss(W,\theta),\nabla_W \loss(W,\theta)\right]
\end{equation}
which is an exact equality since $\loss$ is quadratic in $W$. \Cref{eq:taylor-loss} is minimized at the learning rate
\begin{equation}\label{eq:optimal-lr}
    \eta^* = \frac{\|\nabla_W \loss(W,\theta)\|^2_F}{H_W \loss(W,\theta)\left[\nabla_W \loss(W,\theta),\nabla_W \loss(W,\theta)\right]}.
\end{equation}
Note that the denominator of \cref{eq:optimal-lr} is non-zero since $W\not\in \argmin_{\tilde W} \loss(\tilde W,\theta)$. Let $\phi_\theta(X) = U\Sigma V^\top$ be an SVD, with $U \in \R^{d\times r}$, $\Sigma\in \R^{r\times r}$, $V\in \R^{n \times r}$ where $r$ is the rank of $\phi_\theta(X)$. Plugging \cref{eq:optimal-lr} into \cref{eq:taylor-loss} we get that for any $\eta>0$
\begin{equation}\label{eq:gd-decrease}
    \begin{aligned}
        \loss(W,\theta) - \loss(W_{GD},\theta) &\leq \frac{\|\nabla_W \loss(W,\theta)\|^4_F}{2H_W \loss(W,\theta)\left[\nabla_W \loss(W,\theta),\nabla_W \loss(W,\theta)\right]} \\
        &= \frac{\|(W\phi_\theta(X)-Y)\phi_\theta(X)^\top\|^4_F}{\|(W\phi_\theta(X)-Y)\phi_\theta(X)^\top\phi_\theta(X)\|^2_F} \\
        &= \frac{\|(W\phi_\theta(X)-Y)V\Sigma U^\top\|^4_F}{\|(W\phi_\theta(X)-Y)V\Sigma^2V^\top\|^2_F} \\
        &= \frac{\|(W\phi_\theta(X)-Y)V\Sigma\|^4_F}{\|(W\phi_\theta(X)-Y)V\Sigma^2\|^2_F}.
    \end{aligned}
\end{equation}
Now let $\mathbb P$ be the categorical distribution over the singular values $\sigma$ of $\phi_\theta(X)$ given by
\begin{equation}\label{eq:categorical}
    \mathbb P(\sigma = \Sigma_{ii}) = \sum_{j:\Sigma_{jj}=\Sigma_{ii}}\frac{\|((W\phi_\theta(X)-Y)V)_{\cdot j}\|^2}{\|(W\phi_\theta(X)-Y)V\|_F^2}.
\end{equation}
The sum in \cref{eq:categorical} is over repeated singular values. We get
\begin{equation}\label{eq:prob-interp1}
    \begin{aligned}
        \|(W\phi_\theta(X)-Y)V\Sigma\|^2_F &= \|(W\phi_\theta(X)-Y)V\|^2_F \cdot \sum_{i=1}^r \frac{\|(W\phi_\theta(X)-Y)V)_{\cdot i}\|^2}{\|(W\phi_\theta(X)-Y)V\|^2_F}\Sigma_{ii}^2 \\
        &= \|(W\phi_\theta(X)-Y)V\|^2_F\cdot \E_{\mathbb P}[\sigma^2]
    \end{aligned}
\end{equation}
and similarly
\begin{equation}\label{eq:prob-interp2}
    \begin{aligned}
        \|(W\phi_\theta(X)-Y)V\Sigma^2\|^2_F &= \|(W\phi_\theta(X)-Y)V\|^2_F \cdot \sum_{i=1}^r \frac{\|(W\phi_\theta(X)-Y)V)_{\cdot i}\|^2}{\|(W\phi_\theta(X)-Y)V\|^2_F}\Sigma_{ii}^4 \\
        &= \|(W\phi_\theta(X)-Y)V\|^2_F\cdot \E_{\mathbb P}[\sigma^4].
    \end{aligned}
\end{equation}
Plugging \cref{eq:prob-interp1,eq:prob-interp2} into \cref{eq:gd-decrease} we get
\begin{equation}\label{eq:gd-decrease-prob}
    \begin{aligned}
        \loss(W,\theta) - \loss(W_{GD},\theta) &\leq \|(W\phi_\theta(X)-Y)V\|^2_F \cdot\frac{\E_{\mathbb P}[\sigma^2]^2}{\E_{\mathbb P}[\sigma^4]} \\
        &= \|(W\phi_\theta(X)-Y)V\|^2_F\cdot \left(1+\frac{\Var_{\mathbb P}(\sigma^2)}{\E_{\mathbb P}[\sigma^2]^2}\right)\inv.
    \end{aligned}
\end{equation}
As discussed in \cref{sec_app:modified_loss}, the minimizer $W_{CF} = W^\star(\theta)$ is generally non-unique for $\beta = 0$. By definition of the minimizer, any choice will give the same value $\loss(W_{CF},\theta)$, so without loss of generality we may assume $W_{CF} = Y\phi_\theta(X)^+$, where $\phi_\theta(X)^+\in \R^{n\times d}$ is the Moore-Penrose pseudo-inverse of $\phi_\theta(X)$.
\begin{equation}
    \loss(W_{CF},\theta) = \|Y\phi_\theta(X)^+\phi_\theta(X) -Y\|^2_F \\
    = \|Y(I-VV^\top)\|^2_F \\
    = \|(W\phi_\theta(X)-Y)(I-VV^\top)\|^2_F.
\end{equation}
Then since $VV^\top$ is an orthogonal projection when acting on the right,
\begin{equation}\label{eq:cf-decrease}
    \begin{aligned}
        \loss(W,\theta) &= \|W\phi_\theta(X) - Y\|^2_F \\
        &= \|(W\phi_\theta(X) - Y)VV^\top + (W\phi_\theta(X) - Y)(I-VV^\top)\|^2_F \\
        &= \|(W\phi_\theta(X) - Y)VV^\top\|^2_F + \|(W\phi_\theta(X) - Y)(I-VV^\top)\|^2_F \\
        &= \|(W\phi_\theta(X) - Y)V\|^2_F + \loss(W_{CF},\theta).
    \end{aligned}
\end{equation}
\Cref{thm:one-step} then follows by combining \cref{eq:cf-decrease} with \cref{eq:gd-decrease-prob}.

\section{Alternative Algorithm}\label{sec_app:alternative_algo}

\cref{alg:method} presented in Section~\ref{sec:method} structures the updates in such a way that the backbone parameters are updated on the current batch and the last layer is updated on the future batch. This allows us to keep the last layer parameters up-to date. Moreover, this is theoretically justified since it directly follows the structure of~\eqref{eq:new-sgd-update} and \Cref{thm:stochastic-gradient-equiv}.

Here, we present an alternative approach which we describe in~\cref{alg:method_simple}. In this algorithm, we update the backbone and the last layer on the same batch, but in a reversed order. Overall, we found this approach to lead to a similar performance as~\cref{alg:method}, see~\Cref{sec_app:comparison}. The minor advantage of this method is that it is conceptually easier to implement (since we do not need to track two consecutive batches), but this approach departs from~\eqref{eq:new-sgd-update} and from \Cref{thm:stochastic-gradient-equiv}, and therefore is less theoretically justified.

\begin{algorithm}
\caption{Simple proximal closed-form SGD}
\begin{algorithmic}[1] 
    \State \textbf{Given:} Batch size $B$, proximal coefficient $\lambda > 0$, neural network $\phi_{\theta}$ with initial parameters $\theta_0$, learning rate $\alpha > 0$, initial last layer parameters $W_0$.
    \State $t \gets 0$
    \While{${\theta}_t$ has not converged}
    \State $t \gets t+1$
    \State \textbf{Update backbone on the current batch $\batch_t$}
    \State $\theta_{t} \gets {\theta}_{t-1} - \alpha \nabla_{\theta}\loss_{\batch_t}(  W_{t-1},{\theta}_{t-1})$
    \State \textbf{Update last layer on the current batch $\batch_{t}$}
    \State ${W}_t \leftarrow W^\star_{\mathcal B_{t},{W}_{t-1}}(\theta_t)$
\EndWhile
    \State \textbf{Output:} Optimized $(W^\star, \theta^\star)$
\end{algorithmic}
\label{alg:method_simple}
\end{algorithm}



\section{Deep Feature Instrumental Variable Regression}
\label{sec_app:dfiv}

In Instrumental Variable Regression, we observe a treatment $X$ and an outcome $Y$. But we have an unobserved confounder that affects both $X$ and $Y$, specifically we have the relation
\begin{equation}
    Y = \fst (X)+\epsilon,\quad \E[\epsilon] = 0, \quad \E[\epsilon\mid X] \neq 0
\end{equation}
where $\fst$ is called the structural function which we aim to infer, and $\epsilon$ is an additive noise term. Because $\E[\epsilon\mid X] \neq 0$, we cannot use ordinary supervised learning techniques. Instead we assume we have access to an instrumental variable $Z$ which satisfies $\E[\epsilon\mid Z] = 0$. Then we have that $\E[Y\mid Z] = \E[\fst(X)\mid Z]$, so we solve this equation for $\fst$.

Deep Feature Instrumental Variable Regression (DFIV) \citep{xu_learning_2020} solves this by using two neural networks. The first neural network models $w^\top \psi_{\theta_X}(x) = \fst(x)$, and the second neural network models $W\phi_{\theta_Z}(z) = \E[\psi_{\theta_X}(X)\mid Z=z]$. It alternates between two stages. In the first stage $W$ and $\theta_Z$ are regressed to fit 
\begin{equation}\label{eq:iv1}
    W\phi_{\theta_Z}(z) = \E[\psi_{\theta_X}(X)\mid Z=z].
\end{equation}
using a squared loss on some data $\{(x_i^{(1)},z_i^{(1)})\}$:
\begin{equation}
    \loss^{(1)}(W,\theta_Z) := \sum_i\|W\phi_{\theta_Z}(z_i^{(1)})-\psi_{\theta_X}(x_i^{(1)})\|^2_2+\text{regularizer}(W)
    \label{eq:first_stage_ridge}
\end{equation}

Solving $W$ in closed-form with a ridge or proximal regularizer makes it implicitly depend on $\theta_X$, which we write $W^\star(\theta_X)$. Leveraging this dependence, in the second stage $w$ and $\theta_X$ are regressed to fit
\begin{equation}\label{eq:iv2}
    w^\top W^\star(\theta_X) \phi_{\theta_Z}(z) = \E[Y \mid Z = z]
\end{equation}
using a squared loss on some data $\{(y_i^{(2)},z_i^{(2)})\}$:
\begin{equation}
    \loss^{(2)}(w,\theta_X) := \sum_{i}\|w^\top W^\star(\theta_X) \phi_{\theta_Z}(z_i^{(2)})-y_i^{(2)}\|^2_2 +\text{regularizer}(W).
    \label{eq:second_stage_ridge}
\end{equation}
When both \cref{eq:iv1,eq:iv2} are simultaneously satisfied we see that $\E[w^\top\psi_{\theta_X}(X)\mid Z] = \E[Y\mid Z] = \E[\fst(X)\mid Z]$, as required. Since this is a bilevel optimization problem, we alternate between the two stages.

Similarly to the methodology introduced in Section~\ref{sec:stochastic}, we will consider stochastic versions of~\eqref{eq:first_stage_ridge} and of~\eqref{eq:second_stage_ridge}. We consider batches of data, $\batch^{(1)} \subset \{(x_i^{(1)},z_i^{(1)})\}$ for first stage, and $\batch^{(2)} \subset \{(y_i^{(2)},z_i^{(2)})\}$ for second stage. Given the previous estimate $W_{t}$, we consider the following proximal loss function for the first stage with $\lambda_1 \geq 0$ regularization, 
\begin{equation}
    \loss^{(1)}_{\batch^{(1)},W_{t}}(W,\theta_Z, \theta_X;\lambda_1) := \sum_{(x_i,y_i) \in \batch^{(1)}}\|W\phi_{\theta_Z}(z_i^{(1)})-\psi_{\theta_X}(x_i^{(1)})\|^2_2+\lambda_1 \|W-W_{t}\|^2
    \label{eq:first_stage_batch_prox}
\end{equation}
We denote the minimizer of this loss with respect to the last layer by
\begin{equation}
    W^\star_{\batch^{(1)}, W_{{t}}}(\theta_{Z},\theta_X;\lambda_1) \leftarrow \arg\min_{W} \loss^{(1)}_{\batch^{(1)},W_{t}}(W,\theta_Z,\theta_X;\lambda_1)
    \label{eq:first_stage_batch_prox_last_layer}
\end{equation}
Here,we add $\lambda_1$ in the arguments of the loss~\eqref{eq:first_stage_batch_prox} and in the minimizer~\eqref{eq:first_stage_batch_prox_last_layer} because second stage will use the solution~\eqref{eq:first_stage_batch_prox_last_layer} with a different regularization. In the second stage, the loss is
\begin{equation}
    \loss^{(2)}_{\batch^{(2)},w_{t},W_t}(w,\theta_X,\theta_Z;\lambda_{1,2},\lambda_2) := \sum_{i}\|w^\top W^\star_{\batch^{(1)}, W_{{t}}}(\theta_{Z},\theta_X;\lambda_{1,2}) \phi_{\theta_Z}(z_i^{(2)})-y_i^{(2)}\|^2_2 +\lambda_2 \|w-w_{t}\|^2.
    \label{eq:second_stage_batch_prox}
\end{equation}
We denote the minimizer of this loss with respect to the last layer by
\begin{equation}
    w^\star_{\batch^{(2)},w_t}(\theta_X,\theta_Z;\lambda_{1,2},\lambda_2) \leftarrow \arg\min_{w}\loss^{(2)}_{\batch^{(2)},w_{t},W_t}(w,\theta_X,\theta_Z;\lambda_{1,2},\lambda_2)
    \label{eq:second_stage_batch_prox_last_layer}
\end{equation}
Here, we use parameter $\lambda_{1,2}$ in the closed form solution to the first stage \textbf{inside} second stage, i.e., $W^\star_{\batch^{(1)}, W_{{t}}}(\theta_{Z},\theta_X;\lambda_{1,2})$, as we found that $\lambda_{1,2} < \lambda_1$ worked much better in practice.

For our method ``\emph{$\ell_2$ c.f.~proximal ($\lambda)$}'',  we use three distinct proximal hyperparameters: $\lambda_{1}$ for the closed-form solution of $W^\star_{\batch^{(1)}, W_{{t}}}(\theta_{Z},\theta_X;\lambda_1)$~\eqref{eq:first_stage_batch_prox_last_layer} in stage 1, $\lambda_2$ for the closed form solution $w^\star_{\batch^{(2)},w_t}(\theta_X,\theta_Z;\lambda_{1,2},\lambda_2)$~\eqref{eq:second_stage_batch_prox_last_layer} and a separate $\lambda_{1,2} < \lambda_1$ regularization parameter for the closed form solution $W^\star_{\batch^{(1)}, W_{{t}}}(\theta_{Z},\theta_X;\lambda_{1,2})$ which is used inside the second stage~\eqref{eq:second_stage_batch_prox},\eqref{eq:second_stage_batch_prox_last_layer}. See \cref{alg:dfiv} for full algorithmic details.

For baseline ``\emph{$\ell_2$ c.f.~ridge ($\beta)$}'', we essentially replace all the proximal penalties in~\eqref{eq:first_stage_batch_prox} and in~\eqref{eq:second_stage_batch_prox} by ridge penalties, i.e. $\beta_1 \|W\|^2$ and $\beta_2\|w\|^2$.


\begin{algorithm}
\caption{DFIV proximal}
\begin{algorithmic}[1] 
    \State \textbf{Given:} Stage $1$ data $\{(x_i^{(1)},z_i^{(1)})\}$, stage $2$ data $\{(y_i^{(2)},z_i^{(2)})\}$, batch sizes $B_1, B_2$, proximal coefficients $\lambda_1,\lambda_2,\lambda_{1,2} > 0$, neural networks $\psi_{\theta_X}, \phi_{\theta_Z}$ with initial parameters $\theta_{X0}, \theta_{Z0}$ respectively, learning rates $\alpha_1, \alpha_2 > 0$, initial last layer parameters $w_0, W_0$, number of updates in each stage $T_1,T_2$.
    \State $t_1 \gets 0$
    \State $t_2 \gets 0$
    \While{${\theta}_{Zt_1}$ and ${\theta}_{Xt_2}$ have not converged}
    \State Sample $B_1$ stage $1$ data $\batch^{(1)} \subset \{(x_i^{(1)},z_i^{(1)})\}$, and $B_2$ stage $2$ data $\batch^{(2)} \subset \{(y_i^{(2)},z_i^{(2)})\}$
    \For{$t=1$ to $T_1$}
    \State $t_1 \gets t_1 +1$
    \State $\theta_{Z_{t_1}} \gets \theta_{Z_{t_1-1}} - \alpha_1 \nabla_{\theta_Z}  \loss^{(1)}_{\batch^{(1)},W_{t_1-1}}(W_{t_1-1},\theta_{Z_{t_1-1}}, \theta_{X_{t_2}};\lambda_1)$
    \State $W_{t_1} \gets W^\star_{\batch^{(1)}, W_{{t_1-1}}}(\theta_{Z_{t_1}},\theta_{X_{t_2}};\lambda_1)$ using~\eqref{eq:first_stage_batch_prox_last_layer} with proximal coefficient $\lambda_1$
    \EndFor
    \For{$t=1$ to $T_2$}
    \State $t_2 \gets t_2 +1$
    \State Get $W^\star(\theta_{X_{t_2-1}}) = W^\star_{\batch^{(1)}, W_{{t_1}}}(\theta_{Z_{t_1}},\theta_{X_{t_2-1}};\lambda_{1,2})$ via~\eqref{eq:first_stage_batch_prox_last_layer} with proximal coefficient $\lambda_{1,2}$ 

    \State $\theta_{Xt_2} \gets \theta_{X(t_2-1)} - \alpha_2 \nabla_{\theta_X} \loss^{(2)}_{\batch^{(2)},w_{t_2-1},W^\star(\theta_{X_{t_2-1}})}(w_{t_2-1},\theta_{X_{t_2-1}},\theta_{Z_{t_1}};\lambda_{1,2},\lambda_2)$
    \State $w_{t_2} \gets  w^\star_{\batch^{(2)},w_{t_2-1}}(\theta_{X_{t_2}},\theta_{Z_{t_1}};\lambda_{1,2},\lambda_2)$ using~\eqref{eq:second_stage_batch_prox_last_layer} with proximal coefficient $\lambda_2$
    \EndFor
\EndWhile
    \State \textbf{Output:} Optimized $(w^\star, W^\star, \theta_X^\star, \theta_Z^\star)$
\end{algorithmic}
\label{alg:dfiv}
\end{algorithm}


\section{Experimental Details}
\label{sec_app:experimental_details}

\paragraph{Hardware.} For every experiment we used $A100$ GPU. For all the regression experiments, every hyperparameter run was using 1 $A100$ GPU. For ImageNet experiments, every hyperparameter run used $16$ $A100$ GPUs.

\paragraph{Quantum Chemistry regression}. QM9 dataset~\citep{ramakrishnan2014quantum} consists of approximately 133,000 small organic molecules and involves predicting 12 distinct quantum chemical properties from 435-dimensional Coulomb Matrix features to represent the molecular structures. We utilized a 3-layer MLP backbone (256 hidden units per layer, GELU activation) to regress 12 molecular properties from 435-dimensional Coulomb Matrix features. We train all the methods for $50$ epochs and we report test-set mean squared error (MSE). We use either SGD or Adam optimizers for the backbone. For SGD, we use Nesterov momentum with $0.9$ coefficient. For SGD, we initialize the last layer with zeros, while for Adam, we use LeCun initialization.

Following a standard 80/10/10 split, we performed a hyperparameter sweep over learning rates $\alpha \in \{0.1, 0.01, 0.005, 0.001, 0.0005, 0.0001\}$, $\lambda \in \{0.01, 0.1, 1.0, 10.0, 100.0, 1000.0, 5000.0, 10000.0, 100000.0\}$ and $\beta \in \{0.00001, 0.0001, 0.001, 0.01, 0.1, 1.0, 10.0, 100.0, 1000.0, 5000.0, 10000.0, 100000.0 \}$. We selected the best hyperparameters using validation set MSE at the last $20\%$ of the training iterations.

\paragraph{FNO regression.} We use a Fourier Neural Operator (FNO)~\citep{fno} backbone with 4 FNO blocks (width 128, 16 Fourier modes, ReLU activation) as the feature extractor, with a separate linear last layer of dimension 1×128 (plus bias). Training is performed at a subsampled resolution (stride 32), and we additionally report test MSE at the full resolution. We train all methods for 100 epochs and report test-set mean squared error (MSE). We use either SGD optimizer (wiht momentum $0.9$, Nesterov) or Adam optimizers for the backbone. For SGD, we initialize the last layer with zeros, while for Adam, we use LeCun initialization.

Using a 1448/200/400 train/validation/test split, we performed a hyperparameter sweep over learning rates 
$\alpha \in \{10,5,2,1,0.5,0.2,0.1,0.05,0.01,0.005,0.001,0.0005\}$ for SGD and $\alpha \in \{0.1,0.01,0.005,0.001,0.0005,0.0001\}$ for Adam, $\lambda \in \{0.0001,0.001,0.01,0.1,1.0,10,100,1000,10000 \}$ and $\beta \in \{0.0001,0.001,0.01,0.1,1.0,10,100,1000,10000 \}$. We used batch sizes $8,32,128$. All experiments are averaged over 3 random seeds. We selected the best hyperparameters using validation set MSE at the end of the training.

\paragraph{DFIV regression.} For the experiments, we follow closely~\citep{xu_learning_2020} and we consider a slightly modified version of \texttt{d-spirtes} task~\citep{dsprites17}. This is an image dataset described by five latent parameters $(\texttt{shape}, \texttt{scale}, \texttt{rotation}, \texttt{posX}, \texttt{posY})$. The images are $64\times64 = 4096$ dimensional. In this experiment, the authors fix the \texttt{shape} parameter to \texttt{heart}, i.e., they only used heart-shaped images. The authors generated data for IV regression in which they use each figure as a treatment variable $X$. Hence, the treatment variable is $4096$-dimensional in this experiment. To make the task more challenging, they used $\texttt{posY}$ as the hidden confounder, which is not revealed to the model. They used three latent varaibles as the instrument variables $Z$. The outcome $Y$ is defined as 
\begin{equation}
    Y = f_{\text{struct}}(X) + 32 (\texttt{posY} - 0.5) + \epsilon,
\end{equation}
where $\epsilon \sim \mathcal{N}(0, 0.5)$. Here, we used $f_{\text{struct}}(X)$ from a different paper~\citep{xu_deep_2021}, which was defined as
\begin{equation}
    f_{\text{struct}}(X) = \frac{(\texttt{vec}(B)^\top X)^2 - 3000}{500},
\end{equation}
where $B \in \mathbb{R}^{64\times64}$, $B_{ij}=\frac{|32-j|}{32}$ and $\texttt{vec}(B)$ collapses the matrix $B$ to a vector of dimensionality $4096$. The choice of this structural function was motivated by~\citep{xu_deep_2021}, because the original choice described in~\cite{xu_learning_2020} led to essentially a constant function (in expectation).

For our experiments, we use different batch sizes. The DFIV method~\citep{xu_learning_2020} essentially corresponds to two-stage ``\emph{$\ell_2$ c.f.~ridge ($\beta)$}'' where we have $\beta_1$ and $\beta_2$ parameters for the first and second stage correspondingly. In our proximal method, DFIV proximal, as described in~\cref{sec_app:dfiv}, we have three parameters $\lambda_1$, $\lambda_2$ for the first and second stage proximal updates and $\lambda_{1,2}$ for first-stage update inside the second stage. In practice, we sweep over $\lambda_1$ and $\lambda_2$ and we use $\lambda_{1,2}$ to be very small, i.e. $\lambda_{1,2}=0.0001$ as we found that using large $\lambda_{1,2}$ did not work well. We choose $T_1 = 20$ and $T_2 = 1$ as in~\cite{xu_learning_2020}.

The datasets are split into the training set with $10000$ points, validation set with $100$ points and holdout test set with $488$ points.

When we evaluate performance, we use two strategies. One is following~\citep{xu_learning_2020} and whenever performance is reported, takes the first stage and second stage backbone parameters, and re-estimates the corresponding last layers on the whole $10000$ training set. In Figure~\ref{fig:causality_results_new} it is represented by the solid line. The second strategy just takes the current estimates of the last layers. In Figure~\ref{fig:causality_results_new} it is represented by the dashed line.

As optimizer, we use AdamW with weight decay parameter $0.1$. We use LeCun initialization for the last layer.

The sweep range for $\beta_1,\beta_2,\lambda_1,\lambda_2$ is $\{0.00001,0.0001,0.001,0.01,0.1,1.0,10.0,100.0,10^3,10^4\}$. On top of that, we also sweep over the learning rate (we use the same learning rate for both stages) in the range $\{0.001, 0.005, 0.01, 0.05, 0.1\}$. Each experiment is run with $3$ seeds. The best hyperparameters are selected by minimizing the mean squred error (MSE) on the validation set at the end of the training, using the first evaluation strategy (re-estimating the last layers on the whole dataset). The performance is reported on the holdout test set.

\paragraph{CIFAR-100.} We always use SGD optimizer with Nesterov momentum $\gamma=0.9$.  For SGD, we initialize the last layer with zeros, while for Adam, we use LeCun initialization. We train on the $80\%$ of the training set and we use the remaining $20\%$ for validation. For reporting performance, we use the corresponding test set. The sweep ranges are $\alpha \in \{10.0, 5.0, 2.0, 1.0, 0.5, 0.3, 0.2, 0.1, 0.05, 0.01, 0.005, 0.001, 0.0005\}$, learning rate and $\lambda \in \{0.0001, 0.001, 0.01, 0.1, 1.0, 10.0, 100.0, 1000.0]$, $\beta \in \{0.0001, 0.001, 0.01, 0.1, 1.0, 10.0, 100.0, 1000.0\}$.

\paragraph{ImageNet.} We follow closely the experimental setup described in~\citep{brock2021highperformancelargescaleimagerecognition}, including learning rate schedule, label smoothing, data augmentations and Nesterov momentum in the SGD. The learning schedule is a warmup cosine decay with the peak learning rate $\alpha=1.6$. We also swept over $\alpha \in [0.1,1.,1.6,2.,5.0]$ range. We use zeros initialization for the last layer. Overall, all the methods performed the best with $\alpha=1.6$ except for ``\emph{$\ell_{2}$ loss}'' which performed the best with $\alpha=1$. For ``\emph{$\ell_2$ c.f. proximal ($\lambda$)}'', we used $\lambda=10000$ and for ``\emph{$\ell_2$ c.f. ridge ($\beta$)}'', we used $\beta=0.01$. To select these parameters, we ran a sweep over $\beta \in [0.0001, 0.001, 0.01, 0.1, 1.0, 10.0, 100.0, 1000.0, 10000.0]$ and over $\lambda \in [0.0001, 0.001, 0.01, 0.1, 1.0, 10.0, 100.0, 1000.0, 10000.0]$. We used validation set of ImageNet for the hyperparmaeters selection.




\section{Additional Results}
\label{sec_app:additional_results}

\subsection{Application to Classification}
\label{sec_app:classification_results}


\paragraph{CIFAR-100.} We perform experiments on the CIFAR-100 dataset~\citep{cifar10}, across batch sizes $B=[32,128,1024,4096]$, where we use ResNet-18~\citep{resnet} as a backbone $\phi_{\theta}$. Please refer to~\Cref{sec_app:experimental_details} for more details.

The results are presented in Figure~\ref{fig:newest_results}. Consistent with the regression setting, our method ``\emph{$\ell_2$ c.f.~proximal ($\lambda)$}'' outperforms the ``\emph{$\ell_2$ loss}'' baseline under both optimizers, with the performance gap widening as batch size increases. The ridge method, ``\emph{$\ell_2$ c.f.~ridge ($\beta)$}'' fails to converge at small batch sizes and only reaches comparable performance at larger ones -- underscoring the role of the proximal term in preventing overfitting to individual mini-batches. Throughout, our method remains competitive and robust regardless of the choice between SGD and Adam. Notably, ``\emph{$\ell_2$ c.f.~proximal ($\lambda)$}'' even surpasses \emph{Cross Entropy} on CIFAR-100, though as we show below, this advantage does not persist at the larger scale of ImageNet.

\begin{figure}[h]
\begin{center}
\includegraphics[width=5in]{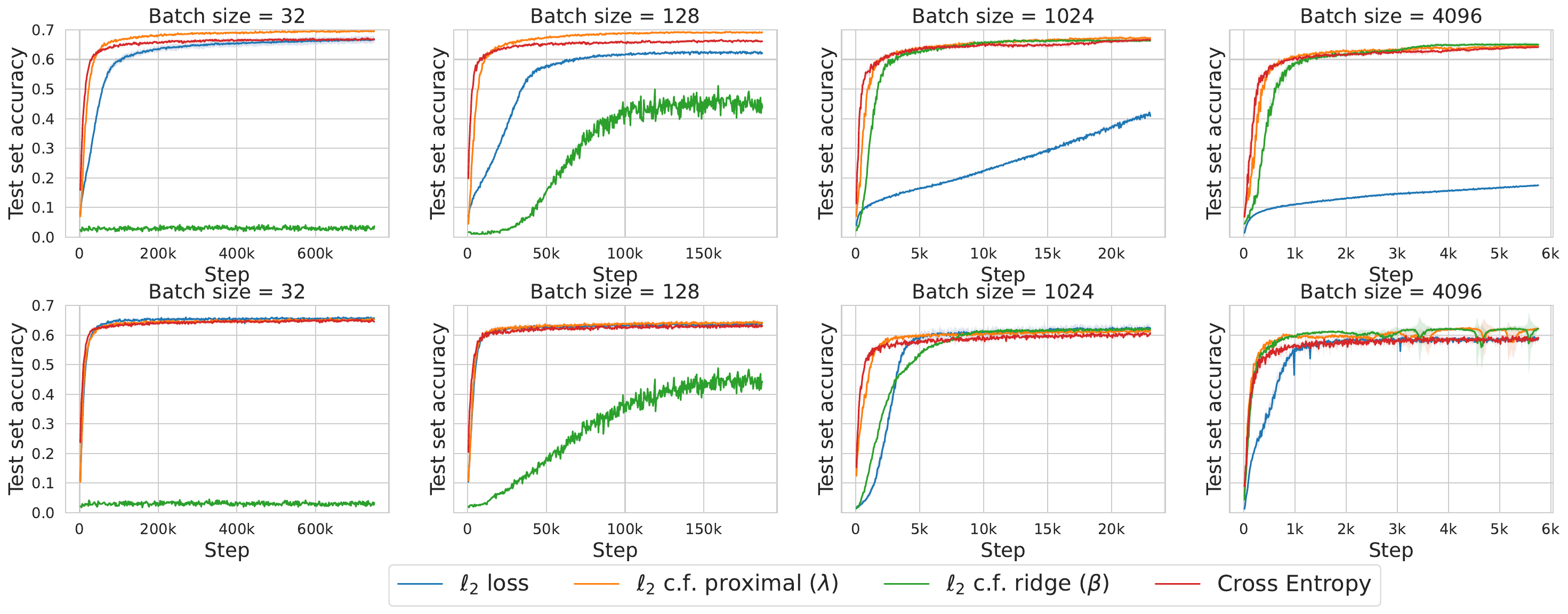}
\end{center}
\caption{\textbf{CIFAR-100 results.}. X-axis is the number of iterations, Y-axis is a test set accuracy. Each column corresponds to a different batch size. Different colors indicate different methods.}
\label{fig:newest_results}
\end{figure}



\paragraph{Impact of $\lambda$ and $\beta$.} In Figure~\ref{fig:cifar100_new_hparams}, we report performance at the end of the training as a function $\lambda$ and $\beta$, as well as the best learning rate $\alpha$ for every batch size. For the first two plots we used the learning rates reported in the third plot. The method ``\emph{$\ell_2$ c.f.~proximal ($\lambda)$}''  is overall robust to $\lambda$ provided it is large enough. We only see some sensitivity for smaller batch sizes. The approach ``\emph{$\ell_2$ c.f.~ridge ($\beta)$}'' is more sensitive to the parameter $\beta$ and works better for larger batch sizes. Finally, both of the approaches benefit from large learning rates whenever batch size is increased, while ``\emph{Cross Entropy}'' and ``\emph{$\ell_2$ loss}'' require small learning rates.


\begin{figure}[h]
\centering
\includegraphics[width=5.3in]{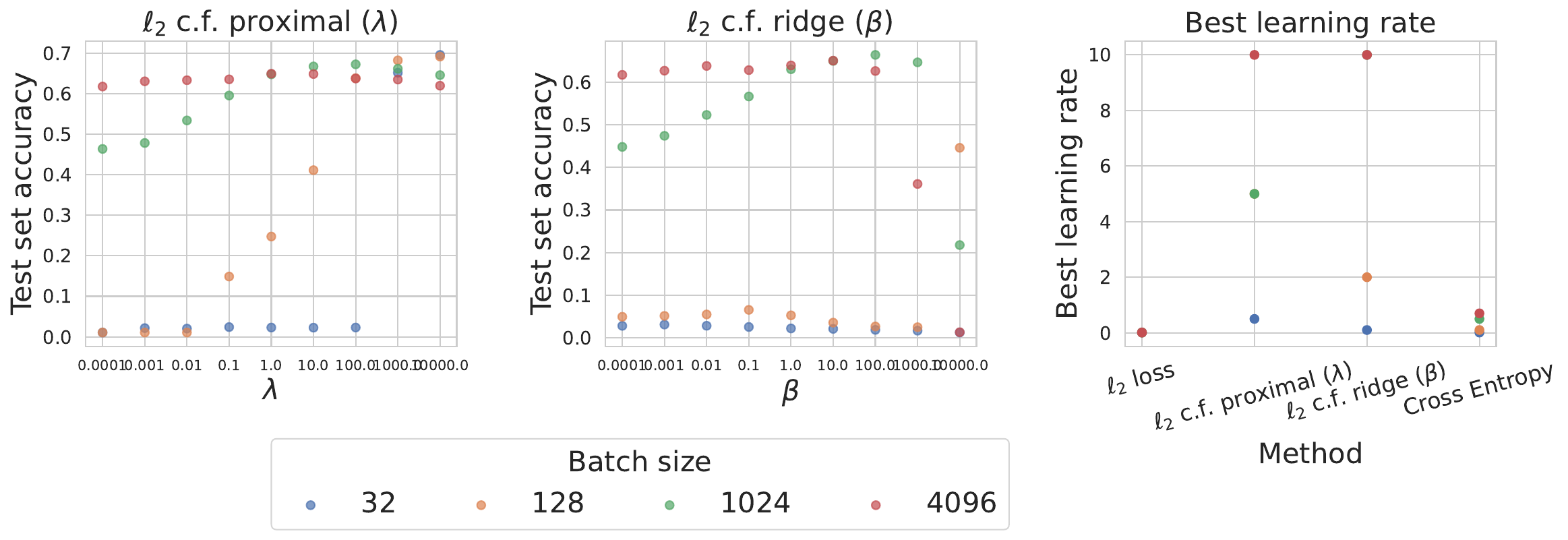} \\
\vspace{0.2in} 
\includegraphics[width=5.3in]{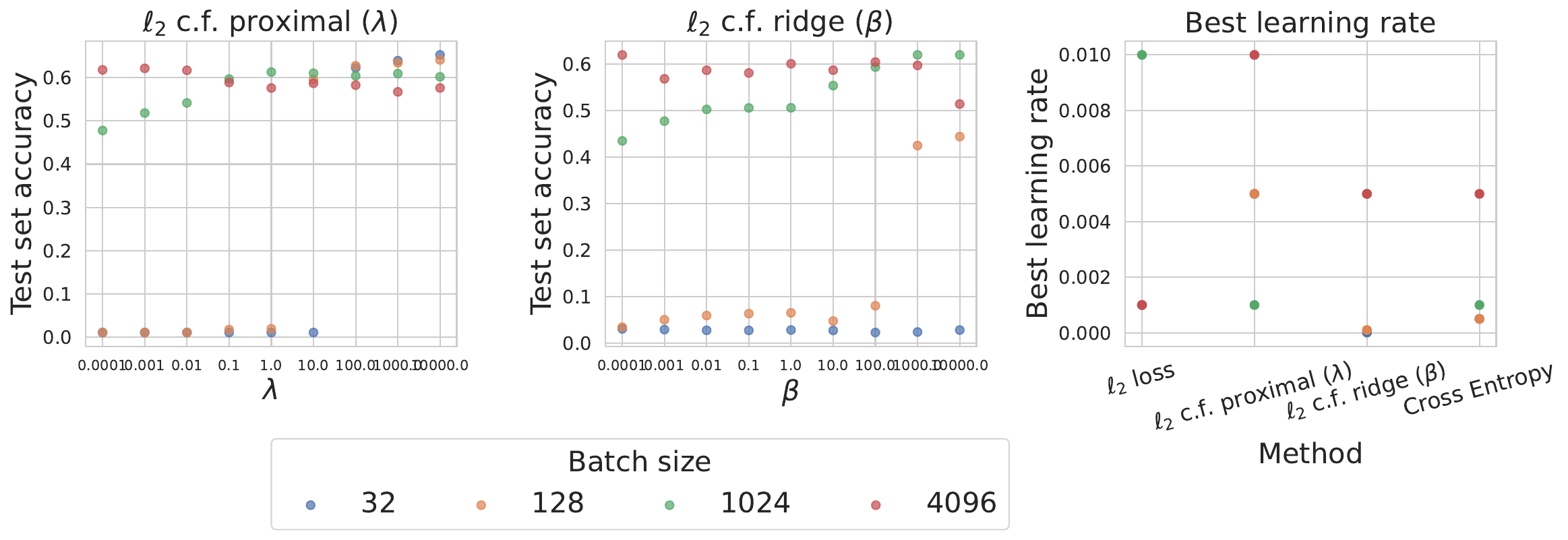}
\caption{\textbf{Dependence on hyperparameters on CIFAR-100}. Top: SGD performance. Bottom: Adam performance. X-axis is the number of iterations, Y-axis is a test set accuracy. \textbf{Left}, ablation over $\lambda$ for `\emph{$\ell_2$ c.f.~proximal ($\lambda)$}''. \textbf{Center}, ablation over $\beta$ for ``\emph{$\ell_2$ c.f.~ridge ($\beta)$}''. \textbf{Right}, the best learning rate per method.}
\label{fig:cifar100_new_hparams}
\end{figure}

\paragraph{Class confidence.}

We run a simple experiment on CIFAR-100 where we track the maximum probability of a correct class. for $\ell_2$ loss, we renormalize this probability to be positive (since the output of the model does not have to be positive nor normalized to 1), i.e., the probabilities are given by
\begin{equation}
    p_c=\frac{\tilde{p}_c}{\sum_{c'} \tilde{p}_{c'}},
\end{equation}
where 
\begin{equation}
    \tilde{p}_c = f(x)_c - \min_{c'} f(x)_{c'},
\end{equation}
and $f(x)=(f(x)_{1},\ldots,f(x)_{C})$ is the output of the model. We present results in Table~\ref{tab:max_probs}. We see that our method has much lower maximum probabilities compared to cross entropy, which means that the results are less confident. This could be, in principle, problematic for datasets with large numbers of classes. In future work, we will investigate  empirical strategies to improve this performance.

\begin{table}[htb]
\centering
\caption{Comparison of maximum probabilities}
\label{tab:max_probs}
\begin{tabular}{rrr}
\toprule
Batch size & Cross entropy max probs & Our method max probs \\
\midrule
32 & 0.99 & 0.63 \\
128 & 0.99 & 0.62 \\
1024 & 0.99 & 0.73 \\
4096 & 0.99 & 0.76 \\
\bottomrule
\end{tabular}
\end{table}

\paragraph{Additional ablations.} We ran ablations on the design choices for our method. We first verified that the inclusion of a bias term in the last layer did not lead to a difference in performance (see~\Cref{fig:cifar100_new_bias_no_bias}). Further, we found that the \emph{zeros} initialization strategy led to the best results (see~\Cref{fig:cifar100_new_initialization}) when using SGD optimizer, consistent to the QM9 experiments.

\paragraph{Large scale classification on ImageNet.} We study the performance of our approach on ImageNet. We use NF-Nets-F0 architecture~\citep{brock2021highperformancelargescaleimagerecognition} with batch size $4096$ and the same training regime as in~\citep{brock2021highperformancelargescaleimagerecognition}. We used $1$ seed for these experiments. See~\Cref{sec_app:experimental_details} for more details. The results are given in~\Cref{fig:imagenet_results}. We see that our method achieves better performance than ``\emph{$\ell_2$ loss}'' loss but under-performs ``\emph{Cross Entropy}''. While this is contrary to our finding on CIFAR-100, ImageNet has ten times the number of classes, so is a significantly different regime. The under-performance of the methods based on the squared loss could be due to advantageous properties of the cross entropy loss in classification, or simply because the training practices with cross entropy have been greatly perfected over the years.





\begin{figure}[h]
\begin{center}
\includegraphics[width=3.3in]{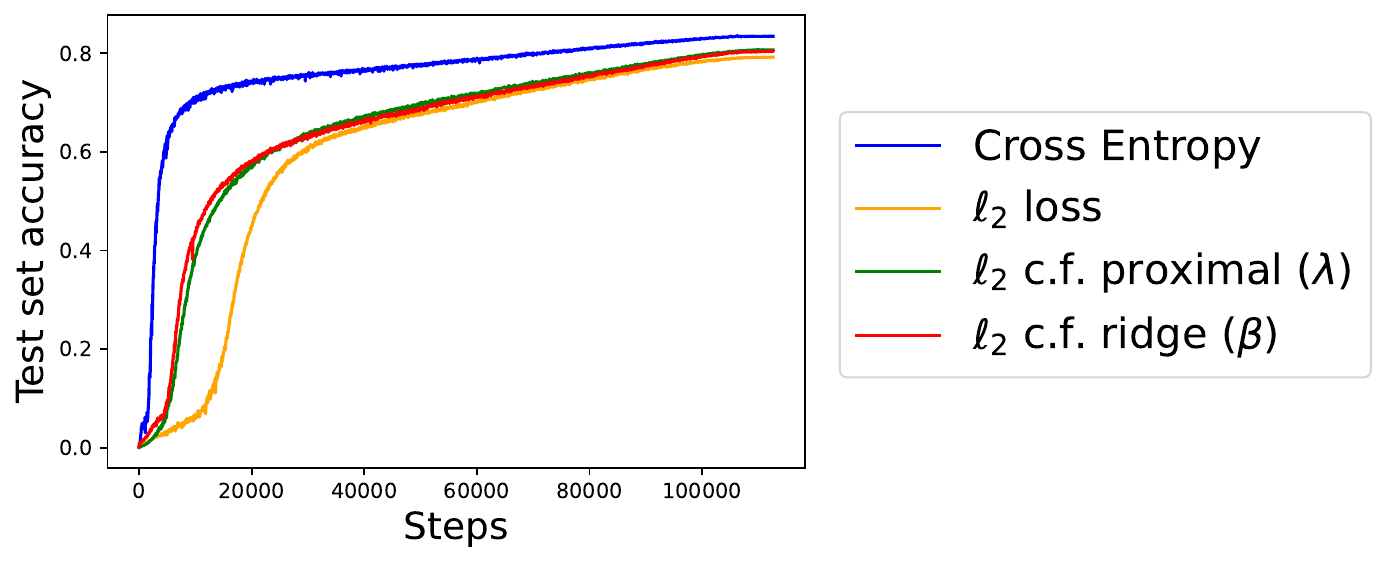}
\end{center}
\vspace{-0.2in}
\caption{\textbf{ImageNet results}. X-axis is the number of iterations, Y-axis is a test set accuracy. Each column corresponds to a different batch size. Different colors indicate different methods.}
\label{fig:imagenet_results}
\end{figure}


\subsection{Impact of Initialization}
\label{sec:initialization_impact}

\paragraph{Impact of initialization on QM9.} We study the impact of initialization of the last layer, discussed in Section~\ref{subsec:numerical_considerations}. The results are given in Figure~\ref{fig:qm9_results}. We see that when we use SGD optimizer, initializing with zeros seems to lead to the best performance (except for very small batch size). When using Adam optimizer, zeros initialization leads to worse results, while all other initializations lead to overall similar performance.

\begin{figure}[h]
\begin{center}
\includegraphics[width=5.3in]{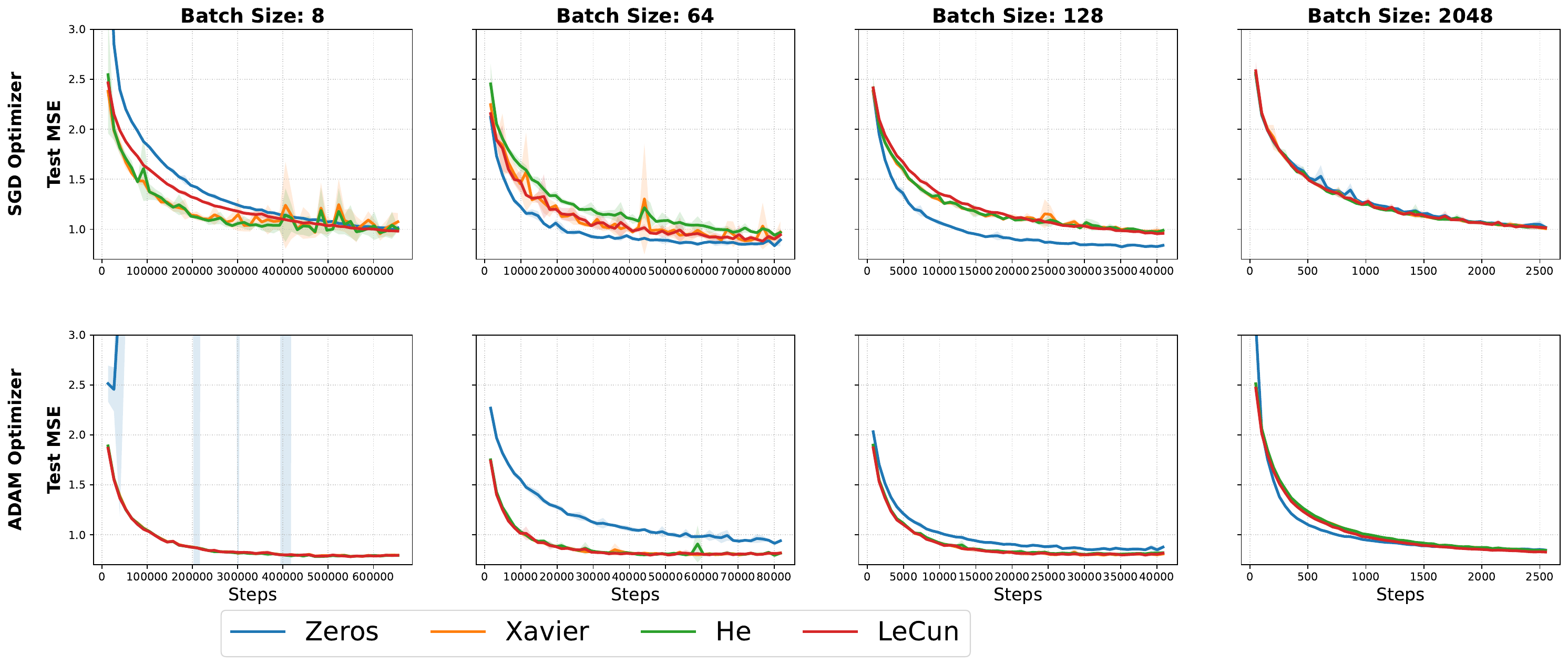}
\end{center}
\caption{\textbf{QM9, impact of initialization}. X-axis is the number of iterations, Y-axis is a test set mean squared error (MSE), columns represents different batch sizes, rows represent different backbone optimizers. Different colors indicate different ways to initialize the last layer.}
\label{fig:qm9_results}
\end{figure}

\paragraph{Impact of initialization on CIFAR-100.} We study impact of different initialization strategies (see~\Cref{subsec:numerical_considerations} for more details) on CIFAR-100 when using SGD optimizer. The results
are given in~\Cref{fig:cifar100_new_initialization}. 
We see that using \emph{zero} initialization leads to overall better performance across batch sizes, which is consistent with our observations on QM9 above.

\begin{figure}[h]
\begin{center}
\includegraphics[width=5in]{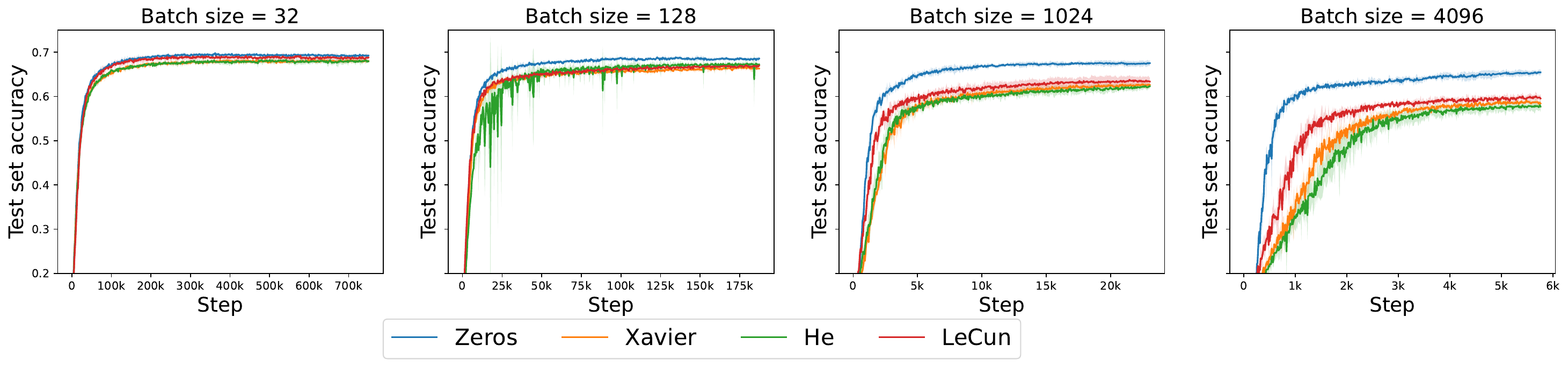}
\end{center}
\caption{\textbf{CIFAR-100, 
impact of initialization}. X-axis is the number of iterations, Y-axis is a test set accuracy. Each column corresponds to a different batch size. Different colors indicate different methods. We use SGD optimizer.}
\label{fig:cifar100_new_initialization}
\end{figure}


\subsection{Use of a Bias Term}
\label{sec:use_of_bias}

\paragraph{Use of a bias on CIFAR-100.} We study the impact of using bias on the performance of ``\emph{$\ell_2$ c.f.~proximal ($\lambda)$}'' on CIFAR-100. The results are 
given in~\Cref{fig:cifar100_new_bias_no_bias}.
We observe similar performance for both strategies.


\begin{figure}[h]
\begin{center}
\includegraphics[width=5in]{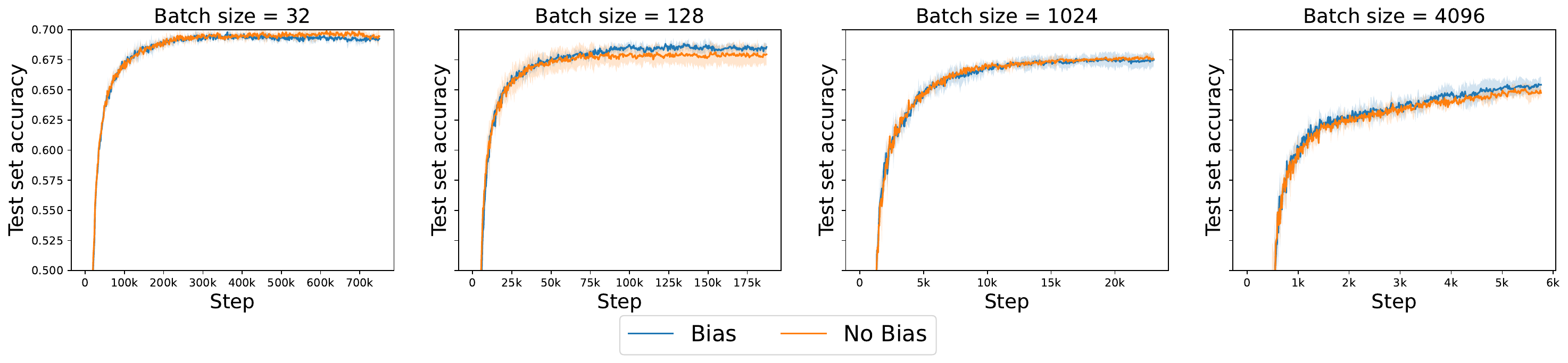}
\end{center}
\caption{\textbf{CIFAR-100, Whether to use a bias}. X-axis is the number of iterations, Y-axis is a test set accuracy. Each column corresponds to a different batch size. Different colors indicate different methods.}
\label{fig:cifar100_new_bias_no_bias}
\end{figure}

\subsection{Comparison of Algorithm~\ref{alg:method} and Algorithm~\ref{alg:method_simple}}
\label{sec_app:comparison}

\paragraph{QM-9.} In Figure~\ref{fig:qm9_results}, we study performance of our method when using Algorithm~\ref{alg:method} or Algorithm~\ref{alg:method_simple}. We see that there is not much difference in performance. The trade-off is that Algorithm~\ref{alg:method_simple} is conceptually easier to implement.

\begin{figure}[h]
\begin{center}
\includegraphics[width=5.3in]{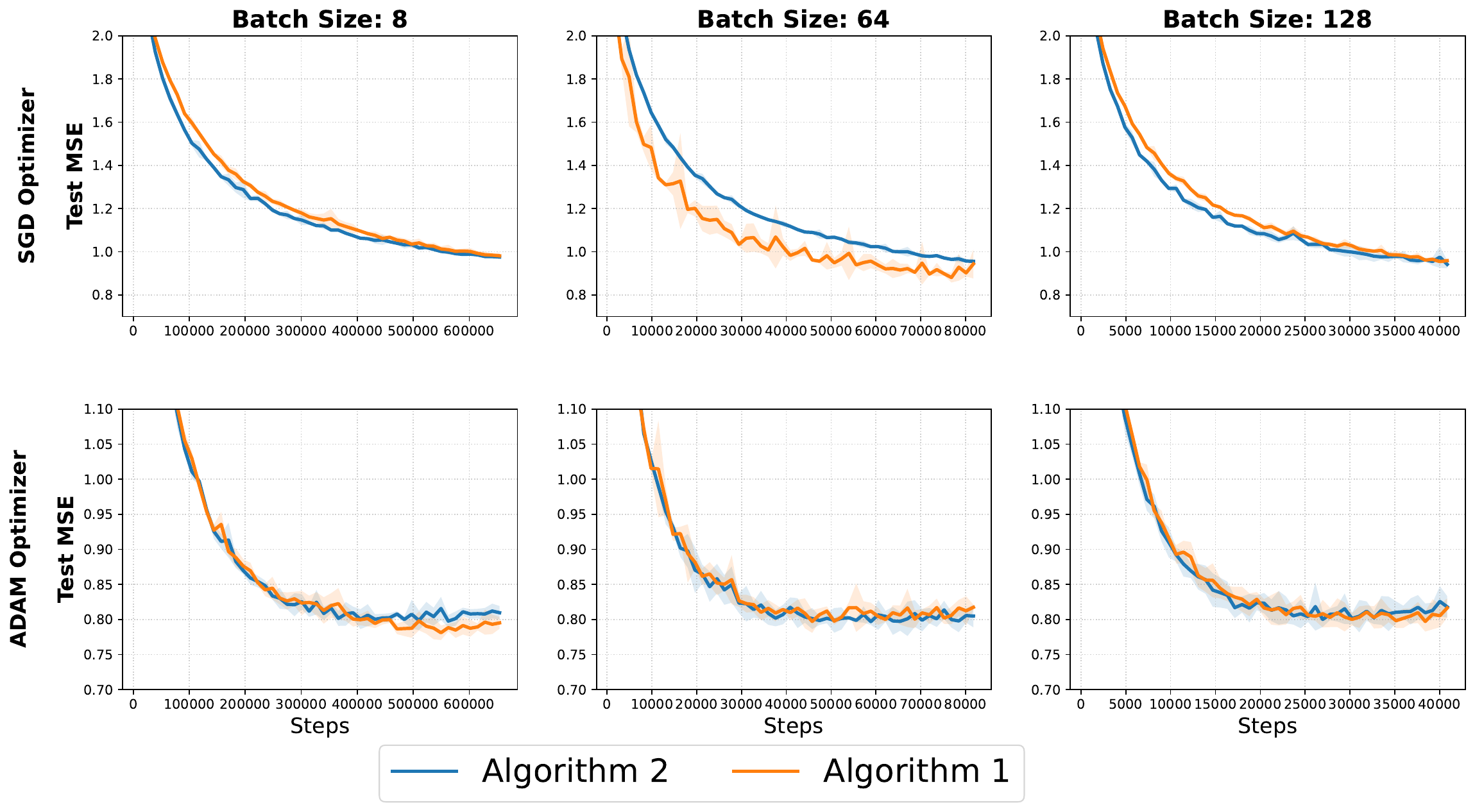}
\end{center}
\caption{\textbf{QM9 algorithms comparison}. X-axis is the number of iterations, Y-axis is a test set mean squared error (MSE), columns represents different batch sizes, rows represent different backbone optimizers. Different colors indicate different algorithms. We use SGD optimizer.}
\label{fig:qm_algorithms}
\end{figure}

\paragraph{CIFAR-100.} We compare the performance of~\cref{alg:method_simple} and~\cref{alg:method} in~\cref{fig:cifar100_order}. We see that there is not much difference in performance. The trade-off is that Algorithm~\ref{alg:method_simple} is conceptually easier to implement.


\begin{figure}[h]
\begin{center}
\includegraphics[width=5.3in]{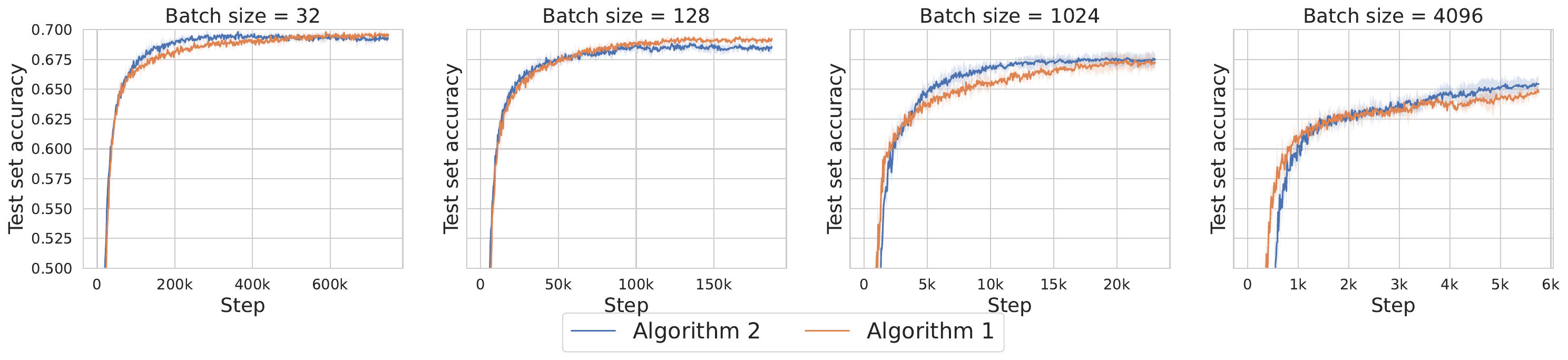}
\end{center}
\caption{\textbf{Comparison of~\cref{alg:method} and~\cref{alg:method_simple}}. X-axis is the number of iterations, Y-axis is a test set accuracy. A column indicates a batch size while a color represents an algorithm.
}
\label{fig:cifar100_order}
\end{figure}

\section{Wall-Clock Time Analysis on CIFAR-100}
\label{sec_app:wall_clock}

In order to understand the behavior of the proposed method in real scenarios, we conduct a wall-clock time analysis on CIFAR-100. We consider ResNet18 and ResNet50 backbones, where add an additional last layer of a given dimensionality (\textbf{Last Layer Dim}). We report steps per second (\textbf{SPS}) metrics as well as total time (\textbf{Time}) for training a model for 100 epochs on CIFAR-100 using either our method (Algorithm~\ref{alg:method} or Algorithm~\ref{alg:method_simple}) or cross entropy (CE). We report metrics for different batch sizes and last layer dimensions. We report metrics for different batch sizes and last layer dimensions. Moreover, we report \textbf{rSPS} = SPS(Our method) / SPS(CE) (higher means our method is faster than CE) and \textbf{rTime} = Time(Our method) / Time (CE) (lower means our method is faster than CE). We use A100 GPU. The results for ResNet-18 are given in Table~\ref{tab:resnet18_results} and the results for ResNet-50 are given in Table~\ref{tab:resnet50_results}.

\paragraph{Take-aways.} First, we observe that for small batch size ($32$), \textbf{rTime} of our method increases from $0.92$ (last layer dim = $128$) to $2.37$ (last layer dim = $4096$) for ResNet18, and from $1.66$ (last layer dim = 128) to $1.84$ (last layer dim = $4096$) for ResNet50. This means that for small batch sizes, as we increase the last layer dimension, our method becomes significantly slower than cross entropy. However, we also notice that the \textbf{rTime} decreases as we increase the model size from ResNet18 to ResNet50. This highlights the fact that as the model size increases, the computation required for the last layer becomes relatively smaller compared to the computation of the backbone.

Second, we observe that for large batch size ($1024$), \textbf{rTime} only increases from $1.09$ (last layer dim = $128$) to $1.18$ (last layer dim = $4096$) for ResNet18; and from $1.16$ (last layer dim = $128$) to $1.20$ (last layer dim = $4096$). Finally, for ResNet18, we see that for batch size = $4096$, \textbf{rTime} basically stays very similar (from $1.12$ to $1.14$). This highlights that in the large batch size regime, our method is roughly 10-15\% slower than cross entropy.

\begin{table}[ht]
\centering
\caption{Wall clock time comparisons on ResNet-18 with A100 GPU.}
\label{tab:resnet18_results}
\begin{tabular}{cccccccc}
\toprule
Batch Size & Last Layer Dim & SPS (Ours) & SPS (CE) & rSPS & Time (Ours) & Time (CE) & rTime \\
\midrule
32 & 128 & 89.98 & 81.02 & 1.11 & 2164.20 & 2348.11 & 0.92 \\
32 & 256 & 87.35 & 78.97 & 1.11 & 2214.96 & 2400.73 & 0.92 \\
32 & 512 & 78.48 & 79.86 & 0.98 & 2413.57 & 2374.19 & 1.02 \\
32 & 1024 & 70.23 & 81.89 & 0.86 & 2633.73 & 2325.33 & 1.13 \\
32 & 2048 & 51.47 & 79.66 & 0.65 & 3426.96 & 2383.31 & 1.44 \\
32 & 4096 & 30.41 & 82.44 & 0.37 & 5488.86 & 2318.55 & 2.37 \\
128 & 128 & 40.57 & 41.24 & 0.98 & 1293.17 & 1277.97 & 1.01 \\
128 & 256 & 39.59 & 40.74 & 0.97 & 1320.46 & 1281.52 & 1.03 \\
128 & 512 & 38.77 & 40.72 & 0.95 & 1333.75 & 1282.12 & 1.04 \\
128 & 1024 & 35.47 & 40.44 & 0.88 & 1422.86 & 1291.61 & 1.10 \\
128 & 2048 & 29.76 & 41.14 & 0.72 & 1614.35 & 1278.58 & 1.26 \\
128 & 4096 & 20.92 & 40.54 & 0.52 & 2140.42 & 1294.91 & 1.65 \\
1024 & 128 & 4.66 & 5.07 & 0.92 & 1055.30 & 967.64 & 1.09 \\
1024 & 256 & 4.68 & 5.04 & 0.93 & 1046.71 & 975.81 & 1.07 \\
1024 & 512 & 4.68 & 5.09 & 0.92 & 1050.68 & 966.34 & 1.09 \\
1024 & 1024 & 4.66 & 5.05 & 0.92 & 1051.08 & 972.14 & 1.08 \\
1024 & 2048 & 4.57 & 5.10 & 0.89 & 1072.17 & 963.36 & 1.11 \\
1024 & 4096 & 4.21 & 5.02 & 0.84 & 1164.10 & 982.72 & 1.18 \\
4096 & 128 & 1.11 & 1.24 & 0.89 & 1084.92 & 966.99 & 1.12 \\
4096 & 256 & 1.09 & 1.24 & 0.88 & 1097.92 & 968.55 & 1.13 \\
4096 & 512 & 1.08 & 1.24 & 0.87 & 1105.35 & 963.45 & 1.15 \\
4096 & 1024 & 1.10 & 1.23 & 0.90 & 1089.28 & 973.90 & 1.12 \\
4096 & 2048 & 1.09 & 1.23 & 0.89 & 1100.39 & 974.79 & 1.13 \\
4096 & 4096 & 1.08 & 1.24 & 0.88 & 1107.10 & 971.15 & 1.14 \\
\bottomrule
\end{tabular}
\end{table}

\begin{table}[ht]
\centering
\caption{Wall clock time comparisons on ResNet-50 with A100 GPU.}
\label{tab:resnet50_results}
\begin{tabular}{cccccccc}
\toprule
Batch Size & Last Layer Dim & SPS (Ours) & SPS (CE) & rSPS & Time (Ours) & Time (CE) & rTime \\
\midrule
32 & 128 & 36.96 & 42.94 & 0.86 & 4735.09 & 2855.43 & 1.66 \\
32 & 256 & 41.18 & 42.84 & 0.96 & 4240.65 & 4056.31 & 1.05 \\
32 & 512 & 35.85 & 39.59 & 0.91 & 4903.87 & 3496.42 & 1.40 \\
32 & 1024 & 35.22 & 43.28 & 0.81 & 4903.89 & 4002.80 & 1.23 \\
32 & 2048 & 30.85 & 41.52 & 0.74 & 5482.36 & 4173.49 & 1.31 \\
32 & 4096 & 21.68 & 41.97 & 0.52 & 7605.07 & 4133.77 & 1.84 \\
128 & 128 & 16.53 & 18.62 & 0.89 & 2645.24 & 2384.37 & 1.11 \\
128 & 256 & 16.37 & 18.49 & 0.89 & 2659.99 & 2397.10 & 1.11 \\
128 & 512 & 16.15 & 18.30 & 0.88 & 2698.11 & 2423.22 & 1.11 \\
128 & 1024 & 15.45 & 18.52 & 0.83 & 2801.06 & 2405.96 & 1.16 \\
128 & 2048 & 14.31 & 18.44 & 0.78 & 3000.61 & 2409.51 & 1.25 \\
128 & 4096 & 11.95 & 18.05 & 0.66 & 3532.58 & 2448.23 & 1.44 \\
1024 & 128 & 2.26 & 2.59 & 0.87 & 2154.44 & 1875.26 & 1.15 \\
1024 & 256 & 2.25 & 2.61 & 0.86 & 2151.89 & 1866.88 & 1.15 \\
1024 & 512 & 2.24 & 2.60 & 0.86 & 2163.26 & 1874.76 & 1.15 \\
1024 & 1024 & 2.23 & 2.59 & 0.86 & 2174.04 & 1876.25 & 1.16 \\
1024 & 2048 & 2.19 & 2.57 & 0.85 & 2216.16 & 1889.88 & 1.17 \\
1024 & 4096 & 2.12 & 2.56 & 0.83 & 2286.04 & 1898.44 & 1.20 \\
\bottomrule
\end{tabular}
\end{table}

\newpage

\end{document}